\newcommand{\R}{\mathbb{R}}
\newcommand{\N}{\mathbb{N}}
\newcommand{\Lc}{\mathcal{L}}
\newcommand{\Dc}{\mathcal{D}}
\newcommand{\ee}{\varepsilon}
\newcommand{\lo}{\longrightarrow}
\newcommand{\li}{\left}
\newcommand{\re}{\right}
\newcommand{\reg}{\mathrm{reg}}
\newtheorem{definition}{Definition}
\newtheorem{theorem}{Theorem}
\newtheorem{experiment}{Experiment}
\begin{document}

\title{Ensuring Topological Data-Structure Preservation under Autoencoder Compression due to Latent Space Regularization in Gauss--Legendre nodes}

\author{Chethan Krishnamurthy Ramanaik, Juan-Esteban Suarez Cardona, Anna Willmann, Pia Hanfeld, Nico Hoffmann and Michael Hecht\\
\IEEEmembership{Center for Advanced Systems Understanding (CASUS),Görlitz, Germany, \\ Helmholtz-Zentrum Dresden-Rossendorf e.V. (HZDR),Dresden Germany}
\thanks{This work was partially funded by the Center of Advanced Systems Understanding (CASUS), financed by Germany's Federal Ministry of Education and Research (BMBF) and by the Saxon Ministry for Science, Culture and Tourism (SMWK) with tax funds on the basis of the budget approved by the Saxon State Parliament.}
}



\maketitle

\begin{abstract}
  We formulate a data independent latent space regularisation constraint for general unsupervised autoencoders. The regularisation  rests on sampling the autoencoder Jacobian in Legendre nodes, being the centre of the Gauss--Legendre quadrature. Revisiting this classic enables to prove that regularised autoencoders ensure a one-to-one re-embedding of the initial data manifold to its latent representation.
  Demonstrations show that prior proposed regularisation strategies, such as contractive autoencoding,  cause topological defects already for simple examples, and so do convolutional based (variational) autoencoders. In contrast, topological preservation is ensured already by standard multilayer perceptron neural networks when being regularised due to our contribution.
  This observation extends
  through the classic FashionMNIST dataset up to real world encoding problems for MRI brain scans, suggesting that,
  across disciplines, reliable low dimensional representations of complex high-dimensional datasets can be delivered due to this regularisation technique.
\end{abstract}

\begin{IEEEkeywords}
Article submission, IEEE, IEEEtran, journal, \LaTeX, paper, template, typesetting.
\end{IEEEkeywords}

\section{Introduction}
Systematic analysis and post-processing of high-dimensional and high-throughput datasets \cite{pepperkok2006high,perlman2004multidimensional}, is a current computational challenge across disciplines, such as neuroscience \cite{vogt2018machine,carlson2018ghosts,Zhang2020}, plasma physics \cite{karniadakis2021physics,rodriguez2019identifying, Willmann2021}, or cell biology and medicine \cite{kobayashi2022self, chandrasekaran2021image, anitei2014,nikitina2019}.
In the machine learning (ML) community,  autoencoders (AEs) are commonly considered as the central tool for learning a low-dimensional
\emph{one-to-one representation} of high-dimensional datasets. The representations  serve as a baseline for feature
selection and classification tasks, heavily arising for instance in bio-medicine \cite{ronneberger2015u,galimov2022tandem,yakimovich2020mimicry,fisch2020image,andriasyan2021microscopy}.

\begin{figure*}[t!]
\center
\includegraphics[scale=0.2]{./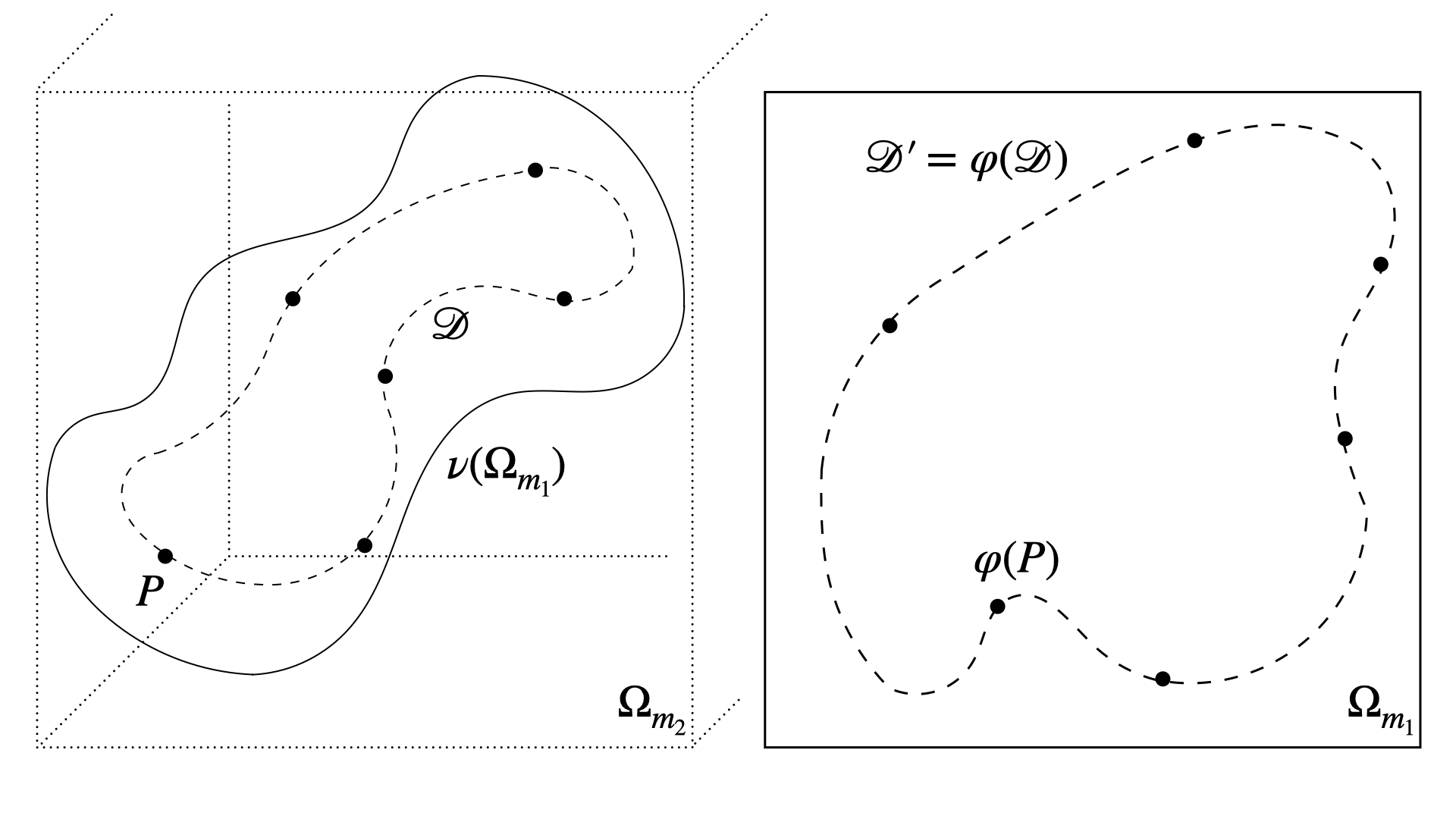}
\caption{Illustration of the latent representation $\Dc' = \varphi(\Dc) \subseteq \Omega_{m_1}$ of the data manifold $\Dc \subseteq \Omega_{m_2}$, $\dim \Dc = m_0 < m_1 < m_2 \in \N$ given by the autoencoder $(\varphi,\nu)$. The decoder is a one-to-one mapping of the hypercube $\Omega_{m_1}$ to its image $\nu(\Omega_{m_1}) \supset \Dc$, including $\Dc$ in its interior
and consequently guaranteeing Eq.~\eqref{eq:embed}.\label{Fig:DM}}
\end{figure*}

AEs can be considered as a \emph{non-linear extension} of classic \emph{principal component analysis} (PCA), \cite{sanchez1979,dunteman1989basic,krzanowski2000}, comparisons for linear problems are given in \cite{Rolinek2019}. While addressing the non-linear case, however, AEs are confronted with the challenge of guaranteeing topological data-structure preservation under AE compression.

\emph{To state the problem:} We mathematically formalise AEs as pairs of continuously differentiable  maps $(\varphi,\nu)$, $\varphi : \Omega_{m_2} \lo \Omega_{m_1}$,  $\nu : \Omega_{m_1} \lo \Omega_{m_2}$, $0<m_1 < m_2 \in \N$, defined on bounded domains $\Omega_{m_1}\subseteq \R^{m_1},\Omega_{m_2}\subseteq \R^{m_2}$.
Commonly, $\varphi$ is termed the \emph{encoder}, and $\nu$ the \emph{decoder}. We assume that the data $D\subseteq \Dc$ is sampled from a regular or even smooth data manifold $\Dc \subseteq \Omega_{m_2}$, $\dim \Dc =m_0 \leq m_1$.

We seek to find proper AEs $(\varphi,\nu)$ yielding homeomorphic latent representations
$\varphi(D)=\Dc' \cong \Dc$. In other words: the restrictions $\varphi_{|\Dc}: \Dc \lo \Dc'$, $\nu_{|\Dc'}: \Dc' \lo \Dc$ of the encoder and decoder result in one-to-one maps, being inverse to each other:
\begin{equation}\label{eq:embed}
\Dc \cong \Dc' := \varphi(\Dc) \subseteq \Omega_{m_1}\,,\quad \nu(\varphi(x))=x \,, \forall x \in \Dc\,.
\end{equation}
While the second condition is usually realised due to minimisation of reconstruction losses, this is insufficient for guaranteeing the one-to-one representation $\Dc \cong \Dc'$.

For realising proper AEs, matching both requirements in Eq.~\eqref{eq:embed},
we strengthen the condition by requiring the decoder to be an embedding of the whole latent domain $\Omega_{m_1}\supset \Dc$, including $\nu(\Omega_{m_1}) \supset \Dc$ in its interior,
see Fig.~\ref{Fig:DM} for an illustration. We mathematically prove and empirically demonstrate this \emph{latent regularisation strategy} to deliver regularised AEs (AR-REG),  satisfying Eq.~\eqref{eq:embed}.

Our investigations are motivated by recent results of  Hansen et al.,\cite{antun2021, gottschling2020,antun2020} complementing other contributions~\cite{NEURIPS2019_cd9508fd, galhotra2017fairness, mazzucato2021reduced} investigating instabilities of machine learning methods from a general mathematical perspective.

\subsection{The inherent instability of inverse problems}

The \emph{instability phenomenon of inverse problems} states that, in general, one
cannot guarantee solutions of inverse problems to be stable. An excellent introduction to the topic is given in \cite{antun2021} with deeper treatments and discussions  in  \cite{gottschling2020,antun2020}.

In our setup, these insights translate to the fact that, in general, the \emph{local Lipschitz constant}
\begin{equation*}
    L_\ee(\nu,y) = \sup_{0<\|y' - y\|<\ee} \frac{\|\nu(y') - \nu(y)\|}{\|y' -y\|}\,, \quad \ee >0
\end{equation*} of the decoder $\nu : \Omega_{m_1} \lo \Omega_{m_2}$ at some latent code $y \in \Omega_1$ might be unbounded. Consequently, small perturbations $y' \approx y$ of the latent code can result in large differences of the reconstruction $\|\nu(y') - \nu(y)\| \gg 0$.
This fact generally applies and can only be avoided if an additional control on the null space  of the Jacobian of the encoder $\ker J(\varphi(x))$ is given. Providing this control is the essence of our contribution.

\subsection{Contribution}
Avoiding the aforementioned instability, requires the null space of the encoder  Jacobian $\ker J(\varphi)$ to be perpendicular to the tangent space of $\Dc$
\begin{equation}\label{eq:STAB}
    \ker J(\varphi) \perp T\Dc \,.
\end{equation}
In fact, due to the \emph{inverse function theorem}, see e.g.\cite{lang1985,krantz2013}, the conditions Eq.~\eqref{eq:embed} and Eq.~\eqref{eq:STAB} are equivalent. In Fig.~\ref{Fig:DM}, $\ker J (\varphi)$ is illustrated to be perpendicular to the image of the whole latent domain $\Omega_{m_1}$
\begin{equation*}
    \ker J (\varphi) \perp T\nu(\Omega_{m_1})\,, \quad  \nu(\Omega_{m_1})\supseteq \Dc\,,
\end{equation*}
being  sufficient for guaranteeing  Eq.~\eqref{eq:STAB}, and consequently, Eq.~\eqref{eq:embed}.

While several state-of-the-art AE regularisation techniques are commonly established, none of them specifically formulates this necessary mathematical requirement in Eq.~\eqref{eq:STAB}.
Consequently, we are not aware of any regularisation approach that can theoretically guarantee the regularised AE to preserve the topological data-structure, as we do in Theorem~\ref{theo:AE}.
For realising a \emph{latent space regularised} AE (AE-REG) we introduce the $L^2$-regularisation loss
\begin{equation}\label{eq:loss}
    \Lc_{\reg}(\varphi,\nu) = \|J(\varphi\circ\nu) - I\|_{L^2(\Omega_{m_1})}^2
\end{equation}
and mathematically prove the AE-REG to satisfy condition Eq.~\eqref{eq:embed}, Theorem~\ref{theo:AE}, when being trained due to this additional regularisation.
To approximate $\Lc_{\reg}(\varphi,\nu)$ we revisit classic Gauss--Legendre quadratures (cubatures) \cite{stroud,stroud2,trefethen2017,Trefethen2019,sobolev1997theory}, only requiring sub-sampling of $J(\nu\circ\varphi)(p_\alpha)$, $p_\alpha \in P_{m,n}$
on a Legendre grid of sufficient high resolution $1\ll |P_{m,n}|$ in order to execute the regularization.
While the data independent latent Legendre nodes $P_{m,n}\subseteq \Omega_{m_1}$ are contained in the smaller dimensional latent space,
regularisation of high resolution can be efficiently realised.

Based on our prior work \cite{REG_arxiv, cardona2023learning,esteban2022replacing}, and \cite{PIP1,PIP2,MIP,hecht2018}, we complement the regularisation by a \emph{hybridisation approach} combining  autoencoders with
\emph{multivariate Chebyshev-polynomial-regression}. The resulting Hybrid AE is acting on the polynomial
coefficient space, given by pre-encoding the training data due to high-quality regression.

We want to emphasise that the proposed regularization is data independent in the sense that it does not require any prior knowledge of the data manifold, its embedding, or any parametrization of $\Dc$. Moreover, while being condensed into the loss, the regularization is independent of the AE architecture and can be applied to any AE realizations e.g., convolutional or variational AEs, whereas we show already regularized MLP based AEs to perform superior to the aforementioned ones.

As we demonstrate, the regulaization yields the desired re-emebdding, enhances the autoencoders reconstruction quality, and gains robustness under noise-perturbations.

\subsection{Related work - Regularisation of  autoencoders}

A multitude of supervised learning schemes, addressing representation learning tasks, are surveyed in \cite{sup1,sup2}. Self-supervised autoencoders rest on \emph{inductive bias learning} techniques \cite{mitchell1980need,gordon1995evaluation} in combination with vectorised autoencoders
\cite{wu2020vector,van2017neural}. However, the mathematical requirements, Eq.~\eqref{eq:embed}, Eq.~\eqref{eq:STAB} were not considered in these strategies at all. Consequently, one-to-one representations might only be realised due a well chosen inductive bias regularisation for rich datasets
\cite{kobayashi2022self}.

This article focus on regularisation techniques of purely unsupervised AEs. We want to mention the following  prominent approaches:
\begin{enumerate}[left=0pt,label=\textbf{R\arabic*)}]
  \item\label{r1} \emph{Contractive AEs} (ContraAE) \cite{rifai2011higher,rifai2011contractive}
 are based on an ambient Jacobian regularisation loss
 \begin{equation}\label{eq:CAE}
     \Lc_{\reg}^*(\varphi,\nu) = \|J(\nu\circ\varphi) - I\|_{L^2(\Omega_{m_2})}^2
 \end{equation}
 formulated in the ambient domain. This makes contraAEs arbitrarily contractive in perpendicular directions  $ (T\Dc)^\perp$ of $T\Dc$. However, this is in-sufficient to guarantee Eq.~\eqref{eq:embed}. In addition, the regularisation is data dependent, resting on the training dataset, and is computationally costly due to the large Jacobian $J \in \R^{m_2\times m_2}$, $m_2\gg m_1 \geq 1$.
 Several experiments in Section~\ref{sec:num} demonstrate contraAE to fail delivering topologically preserved representations.

 \item\label{r2} \emph{Variational AEs} (VAE), along with extensions like $\beta$-VAE, consist of \emph{stochastic encoders and decoders} and are commonly used for density estimation and generative modelling of complex distributions based on minimisation of the Evidence Lower Bound (ELBO) \cite{kingma2013auto,burgess2018understanding}. The variational latent space distribution induces an implicit regularisation, which is complemented by \cite{Kumar2020,Rhodes2021} due to a $l_1$-sparsity constraint of the decoder Jacobian.

 However, as the contraAE-constraint, this regularisation is computationally costly and insufficient for guaranteeing a one-to-one encoding, which is reflected in the degenerated representations appearing in Section~\ref{sec:num}.

\item\label{r3}  \emph{Convolutional AEs} (CNN-AE) are known to deliver one-to-one representations
 for a generic setup theoretically \cite{gilbert2017}. However, the implicit convolutions seems to prevent clear separation of tangent $T\Dc$ and perpendicular direction $(T\Dc)^\perp$ of the data manifold $\Dc$, resulting in topological defects already for simple examples, see Section~\ref{sec:num}.
\end{enumerate}

\section{Mathematical concepts}

We provide the mathematical notions on which our approach rests, starting by fixing the notation.

\subsection{Notation}
We consider neural networks (NNs) $\nu(\cdot,w)$ of fixed architecture $\Xi_{m_1,m_2}$, specifying number and depth of the hidden layers, the choice of  piece-wise smooth activation functions $\sigma(x)$, e.g.  ReLU or $\sin$, with input dimension $m_1$ and output dimension $m_2$. Further, $\Upsilon_{\Xi_{m_1,m_2}}$ denotes the parameter space of the weights and bias $w =(v,b) \in W=V\times B \subseteq \R^K$, $K \in \N$, see e.g. \cite{martin2009,goodfellow2016}.

We denote with $\Omega_m=(-1,1)^m$ the $m$-dimensional open \emph{standard hypercube}, with $\|\cdot\|$ the standard Euclidean norm on $\R^m$ and with $\|\cdot\|_p$, $1 \leq p\leq \infty$ the $l_p$-norm.
$\Pi_{m,n} = \mathrm{span}\{x^\alpha\}_{\|\alpha\|_\infty \leq n}$  denotes the $\R$-\emph{vector space of all real polynomials} in $m$ variables spanned by all monomials
$x^{\alpha} = \prod_{i=1}^mx_i^{\alpha_i}$ of \emph{maximum degree} $n \in \N$ and
$A_{m,n}=\{\alpha \in \N^m : \|\alpha\|_\infty  = \max_{i=1,\dots,m}\{|\alpha_i|\}\leq n\}$ the corresponding multi-index set.
For an excellent overview on functional analysis we recommend \cite{Adams2003,brezis2011,pedersen2018}. Here,
we consider the Hilbert space $L^2(\Omega_m,\R)$ of all \emph{Lebesgue measurable} functions $f:\Omega_m \lo \R$ with finite $L^2$-norm
$\|f\|_{L^2(\Omega_m)}^2  < \infty$ induced by the inner product
\begin{equation}\label{eq:L2}
    <f,g>_{L^2(\Omega_m)} = \int \limits_{\Omega_m}f\cdot g \, d\Omega_m\,, \,\, f,g \in L^2(\Omega,\R)\,.
\end{equation}
Moreover, $C^k(\Omega_m,\R)$, $k \in \N \cup\{\infty\}$ denotes the
\emph{Banach spaces}
of continuous functions being $k$-times continuously differentiable, equipped with the norm $\|f\|_{C^k(\Omega_m)} = \sum_{i=0}^k \sup_{x \in \Omega_m}|D^\alpha f(x)|$, $\|\alpha\|_1\leq k$.

\subsection{Orthogonal polynomials and Gauss-Legendre cubatures}
We follow \cite{gautschi2011,stroud,stroud2,trefethen2017,Trefethen2019} for recapturing:
Let $m,n\in \N$ and $P_{m,n} = \oplus_{i=1}^m \mathrm{Leg_n} \subseteq  \Omega_m$ be the $m$-dimensional Legendre grids, where $\mathrm{Leg_n}=\{p_0,\ldots,p_n\}$ are the $n+1$ \emph{Legendre nodes} given by the roots of the \emph{Legendre polynomials} of degree $n+2$.
We  denote $p_{\alpha} = (p_{\alpha_1}, \ldots, p_{\alpha_m}) \in P_{A_{m,n}}$, $\alpha \in {A_{m,n}}$. The Lagrange polynomials $L_{\alpha}\in \Pi_{A_{m,n}}$, defined by  $L_{\alpha}(p_\beta)=\delta_{\alpha,\beta}$, $\forall \, \alpha,\beta \in {A_{m,n}}$, where $\delta_{\cdot,\cdot}$
denotes the \emph{Kronecker delta}, are  given by
\begin{equation}\label{eq:Lag}
    L_{\alpha} = \prod_{i=1}^ml_{\alpha_i,i}\,, \quad l_{j,i} = \prod_{j \not = i, j=0}^m \frac{x_i -p_j}{p_i-p_j}\,.
\end{equation}
Indeed, the $L_\alpha$ are an orthogonal $L^2$-basis of $\Pi_{m,n}$,
\begin{equation}\label{eq:weight}
\li<L_{\alpha},L_{\beta}\re>_{L^2( \Omega_m)}=\int \limits_{ \Omega_m} L_{\alpha}(x)L_{\beta}(x)d\Omega_m = w_{\alpha} \delta_{\alpha,\beta}\,,
\end{equation}
where the \emph{Gauss-Legendre cubature weight} $w_{\alpha} = \|L_{\alpha}\|^2_{L^2(\Omega_m)}$ can be computed numerically.
Consequently, for any polynomial $Q \in \Pi_{m,2n+1}$ of degree $2n+1$ the following cubature rule applies:
\begin{equation}\label{eq:Gauss}
    \int \limits_{\Omega_m} Q(x)d\Omega_m = \sum_{\alpha \in {A_{m,n}}} w_{\alpha}Q(p_{\alpha})\,.
\end{equation}
Thanks to $|P_{m,n}| = (n+1)^m \ll (2n+1)^m$ this makes \emph{Gauss-Legendre integration} a very powerful scheme, yielding
\begin{equation}\label{eq:L2}
\li <Q_1,Q_2 \re>_{L^2(\Omega_m)}  = \sum_{\alpha \in {A_{m,n}}} Q_1(p_\alpha)Q_2(p_\alpha) w_\alpha \,,
\end{equation}
for all $Q_1,Q_2 \in \Pi_{m,n}$.

In light of this fact, we propose the following AE regularisation method.

\section{Legendre-latent-space regularisation for autoencoders}

The regularization is formulated from the perspective of classic differential geometry, see e.g. \cite{lang1985,chen1999,taubes2011,do2016}.
As introduced in Eq~\eqref{eq:embed}, we assume that the training data $D_{\mathrm{train}}\subseteq \Dc \subseteq \R^{m_2}$ is sampled from a regular data manifold. We formalise the notion of autoencoders:

\begin{definition}[autoencoders and data manifolds]\label{def:AE} Let $1 \leq m_0\leq m_1 \leq m_2\in \N$, $\Dc \subseteq \Omega_{m_2}$ be
a (data) manifold of dimension $\dim \Dc = m_0$. Given
continuously differentiable maps
$\varphi : \Omega_{m_2}\lo \Omega_{m_1}$,
$\nu :\Omega_{m_1} \lo \Omega_{m_2}$
such that:
\begin{enumerate}
\item[i)]  $\nu$ is a right-inverse of $\varphi$ on $\Dc$, i.e, $\nu(\varphi(x))=x$ for all $x \in \Dc$.
\item[ii)] $\varphi$ is a left-inverse of $\nu$, i.e, $\varphi(\nu(y)) = y$
for all $y \in \Omega_{m_1}$
\end{enumerate}
Then we call the pair
$(\varphi,\nu)$ a \emph{proper autoencoder}  with respect to $\Dc$.
\end{definition}

Given a proper AE $(\varphi,\nu)$, $\varphi$ yields a low dimensional homeomorphic re-embedding of
$\Dc \cong \Dc' = \varphi(\Dc) \subseteq \R^{m_1}$
as demanded in Eq.~\eqref{eq:embed} and illustrated in Fig.~\ref{Fig:DM}, fulfilling the stability requirement of Eq.~\eqref{eq:STAB}.

We formulate the following losses for deriving proper AEs:
\begin{definition}[regularisation loss]\label{def:REGL}
Let $\Dc \subseteq \Omega_{m_2}$ be
a $C^1$-data manifold of dimension $\dim \Dc = m_0 < m_1<m_2$ and $\emptyset \neq D_{\mathrm{train}}\subseteq \Dc$ be a finite training dataset.
For NNs
$\varphi(\cdot,u)\in \Xi_{m_2,m_1}$, $\nu(\cdot,w)\in \Xi_{m_1,m_2}$
with weights $(u,w) \in \Upsilon_{\Xi_{m_2,m_1}} \times \Upsilon_{\Xi_{m_1,m_2}}$, we define the loss
\begin{align*}
  \Lc_{D_{\mathrm{train}},n} : \Upsilon_{\Xi_{m_2,m_1}} \times \Upsilon_{\Xi_{m_1,m_2}} \lo \R^+\,, \\
  \Lc_{D_{\mathrm{train}},n}(u,w)  = \Lc_0(D_{\mathrm{train}},u,w) + \lambda \Lc_1(u,w,n)\,,
\end{align*}
where $\lambda>0$ is a hyper-parameter and
\begin{align}
\Lc_0(D_{\mathrm{train}},u,w)=\sum_{x \in D_{\mathrm{train}}}\|x -\nu(\varphi(x,u),w)\|^2  \label{Rec_Loss} \\
 \Lc_1(u,w,n)= \sum_{\alpha \in A_{m_1,n}} \|\mathop{I} - \mathop{J}\big(\varphi(\nu(p_\alpha,w),u)\big)\|^2\,,  \label{REG_Loss}
\end{align}
with $I\in \R^{m_1\times m_1}$ denoting the identity matrix, $p_{\alpha} \in P_{m_1,n}$ be the Legendre nodes, and  $J\big(\varphi(\nu(p_\alpha,w)\big)\in \R^{m_1\times m_1}$ the Jacobian.
\end{definition}

We show that the AEs with vanishing loss result to be proper AEs, Defintion~\ref{def:AE}.

\begin{theorem}[Main Theorem]\label{theo:AE} Let the assumptions of Definition~\ref{def:REGL} be satisfied,  and $\varphi(\cdot,u_n)\in \Xi_{m_2,m_1}$, $\nu(\cdot,w_n)\in \Xi_{m_1,m_2}$ be sequences of continuously differentiable NNs satisfying:
\begin{enumerate}
\item[i)] The loss converges $\Lc_{D_{\mathrm{train}},n} (u_n,w_n)  \xrightarrow[n\rightarrow \infty]{} 0$.
\item[ii)] The weight sequences converge
$$\lim_{n \rightarrow \infty }(u_n,w_n) =(u_\infty,w_\infty) \in \Upsilon_{\Xi_{m_2,m_1}} \times \Upsilon_{\Xi_{m_1,m_2}}\,.$$
    \item[iii)] The decoder satisfies $\nu(\Omega_{m_1},w_n)\supseteq \Dc$, $\forall n \geq n_0\in \N$ for some $n_0 \geq 1$.
\end{enumerate}
Then $(\varphi(\cdot,w_n),\nu(\cdot,u_n)) \xrightarrow[n \rightarrow \infty]{} (\varphi(\cdot,w_\infty),\nu(\cdot,u_\infty))$ uniformly converges to a proper autoencoder with respect to $\Dc$.
\end{theorem}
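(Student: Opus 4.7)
The plan is to peel the three hypotheses apart, extract a pointwise Jacobian identity on the latent cube, and then assemble the two conditions of Definition~\ref{def:AE} in turn. From hypothesis (i), nonnegativity of each summand forces $\Lc_0(D_{\mathrm{train}},u_n,w_n) \to 0$ and $\Lc_1(u_n,w_n,n) \to 0$ separately. From hypothesis (ii) combined with the joint continuous differentiability of feed-forward NNs in weights and input, I would infer uniform $C^1$-convergence on compact sets: $\varphi_n := \varphi(\cdot,u_n) \to \varphi_\infty := \varphi(\cdot,u_\infty)$ and $\nu_n := \nu(\cdot,w_n) \to \nu_\infty := \nu(\cdot,w_\infty)$, together with uniform convergence of the composed Jacobians $J(\varphi_n\circ\nu_n) \to J(\varphi_\infty\circ\nu_\infty)$ on $\overline{\Omega_{m_1}}$.

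The core step is to establish $\varphi_\infty\circ\nu_\infty = \id$ on $\Omega_{m_1}$. I would argue by contradiction: if some $y^\ast \in \Omega_{m_1}$ had $J(\varphi_\infty\circ\nu_\infty)(y^\ast) \neq I$, continuity provides a neighborhood $U \ni y^\ast$ on which $\|J-I\|^2 \geq \varepsilon > 0$; uniform Jacobian convergence then gives $\|J(\varphi_n\circ\nu_n) - I\|^2 \geq \varepsilon/2$ on $U$ for all sufficiently large $n$. Since the Legendre grids $P_{m_1,n}$ become asymptotically dense in $\overline{\Omega_{m_1}}$, some node $p_\alpha$ eventually lies in $U$, so the corresponding summand already contributes $\geq \varepsilon/2$ to $\Lc_1(u_n,w_n,n)$, contradicting $\Lc_1 \to 0$. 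Hence $J(\varphi_\infty\circ\nu_\infty) \equiv I$ on $\Omega_{m_1}$, and integrating along straight segments in the convex domain yields $\varphi_\infty(\nu_\infty(y)) = y + c$ for some constant $c \in \R^{m_1}$. Picking any $x_0 \in D_{\mathrm{train}}$, $\Lc_0 \to 0$ combined with uniform convergence gives $\nu_\infty(\varphi_\infty(x_0)) = x_0$; setting $y_0 = \varphi_\infty(x_0)$ one obtains $y_0 + c = \varphi_\infty(\nu_\infty(y_0)) = \varphi_\infty(x_0) = y_0$, forcing $c = 0$ and hence condition (ii) of Definition~\ref{def:AE}.

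For condition (i) I would fix $x \in \Dc$ and use hypothesis (iii) to pick $y_n \in \Omega_{m_1}$ with $\nu_n(y_n) = x$ for each $n \geq n_0$. Compactness of $\overline{\Omega_{m_1}}$ yields a subsequence with $y_{n_k} \to y_\ast \in \overline{\Omega_{m_1}}$, and uniform convergence of $\nu_n$ on the closure gives $\nu_\infty(y_\ast) = x$. Extending the identity $\varphi_\infty\circ\nu_\infty = \id$ from $\Omega_{m_1}$ to its closure by continuity yields $\varphi_\infty(x) = \varphi_\infty(\nu_\infty(y_\ast)) = y_\ast$, and therefore $\nu_\infty(\varphi_\infty(x)) = \nu_\infty(y_\ast) = x$, which is condition (i). I expect the main obstacle to be the Jacobian-density argument in the middle step: promoting the vanishing discrete Legendre sum to a genuine pointwise identity requires uniform (rather than merely pointwise) $C^1$-convergence of the NN compositions together with asymptotic density of the Legendre nodes in $\overline{\Omega_{m_1}}$; once these are in hand, the integration-plus-constant-pinning and the closure-extension argument for condition (i) are routine.
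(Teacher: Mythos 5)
Your proposal is correct, and it reaches the conclusion by a genuinely different route in the key step. The paper's proof runs through Stone--Weierstrass polynomial approximation of $\rho = \varphi_\infty\circ\nu_\infty$ combined with the Gauss--Legendre quadrature exactness, so that the vanishing node sum is read as the vanishing $L^2$-norm of $J(Q_\rho^n)-I$, and then invokes the inverse function theorem to conclude $\rho=\id$; you instead bypass the quadrature structure entirely and use only the asymptotic density of the Legendre grids $P_{m_1,n}$ in $\overline{\Omega_{m_1}}$, together with uniform $C^1$-convergence, to force $J(\varphi_\infty\circ\nu_\infty)\equiv I$ pointwise by contradiction, and then integrate along segments of the convex cube rather than citing the inverse function theorem. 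Your version is more elementary and, frankly, more complete on two points the paper leaves implicit: where the fixed point needed to pin down the additive constant comes from (you extract it from $\Lc_0\to 0$ at a training point, whereas the paper's Eq.~\eqref{eq:IFT} simply posits an $x_0$ with $\rho(x_0)=x_0$), and how condition $i)$ of Definition~\ref{def:AE} actually follows from hypothesis $iii)$ (your compactness/subsequence argument with continuous extension to $\overline{\Omega_{m_1}}$ versus the paper's one-line assertion). The paper also derives $J(\rho)=I$ from a $C^0$-approximation statement, which does not by itself control Jacobians, so your insistence on uniform $C^1$-convergence is the more careful reading. What the paper's quadrature viewpoint buys is conceptual: it explains \emph{why} Legendre nodes are the natural sampling set (they make the discrete sum an exact $L^2$ integral for polynomials of degree $2n+1$), whereas your argument shows that for the limit theorem itself any asymptotically dense grid would suffice.
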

\begin{proof} The proof follows by combining several facts: First, the \emph{inverse function theorem} \cite{krantz2013} implies that any map $\rho \in C^1(\Omega_m,\Omega_m)$ satisfying \begin{equation}\label{eq:IFT}
    J(\rho(x)) = I\,, \,\, \forall \, x \in \Omega_m\,,  \,\,\text{and}\,\,  \rho(x_0)= x_0\,, \,\,
\end{equation}
for some $x_0\in \Omega_m$ is given by the identity, i.e., $\rho(x) =x$,
$\forall x \in \Omega_m$.

Secondly, the Stone-Weierstrass theorem \cite{weier1,weier2} states that any continuous map $\rho \in C^0(\Omega_m,\Omega_m)$, with coordinate functions  $\rho(x) = (\rho_1(x),\ldots,\rho_m(x))$ can be uniformly approximated by a polynomial map
$Q_\rho^n(x) =(Q_{\rho,1}^n(x),\ldots,Q_{\rho,m}^n(x))$, $Q_{\rho,i}^n(x) \in \Pi_{m,n}$, $1\leq i\leq m$, i.e, $\|\rho - Q_\rho^n\|_{C^0(\Omega_m)} \xrightarrow[n\rightarrow \infty]{} 0$.

Thirdly, while the NNs $\varphi(\cdot,w)$, $\nu(\cdot,u)$ depend continuously on the weights $u,w$, the convergence in $ii)$ is uniform.
Consequently, the convergence $\Lc_{D_{\mathrm{train}},n} (u_n,w_n)  \xrightarrow[n\rightarrow \infty]{} 0$ of the loss implies that any sequence of polynomial
approximations $Q_\rho^n(x)$ of the map
$\rho(\cdot) =\varphi(\nu(\cdot,w_{\infty}),u_{\infty})$ satisfies Eq.~\eqref{eq:IFT} in the limit for $n \rightarrow \infty$. Hence, $\varphi(\nu(y,w_{\infty}),u_{\infty}) = Q_\rho^\infty(y)=y$ for all $y \in \Omega_{m_1}$ yielding requirement $ii)$ of Definition~\ref{def:AE}.

Given that assumption $iii)$
is satisfied, in completion, requirement $i)$ of Definition~\ref{def:AE} holds, finishing the proof.
\end{proof}

Apart from ensuring topological maintenance, one seeks for high-quality reconstructions. We propose a novel hybridisation approach delivering both.

\section{Hybridisation of autoencoders due to polynomial regression}

The hybridisation approach rests on deriving \emph{Chebyshev Polynomial Surrogate Models} $ Q_{\Theta,d}$ fitting the initial training data $d \in D_{\mathrm{train}}\subseteq \Omega_{m_2}$. For the sake of simplicity, we motivate the setup in case of images:

Let $d = (d_{ij})_{1\leq i,j \leq r} \in \R^{r \times r}$ be the intensity values of an image on an equidistant pixel grid $G_{r\times r}=(g_{ij})_{1\leq i,j\leq r} \subseteq \Omega_2$ of resolution $r\times r$, $r\in \N$. We seek for a polynomial
\begin{equation*}
Q_\Theta : \Omega_{2} \lo\R\,, \quad  Q_\Theta \in \Pi_{2,n}\,,
    \end{equation*}
such that evaluating $Q_\Theta$,  $\Theta=(\theta_\alpha)_{\alpha\in A_{2,n}} \in \R^{|A_{2,n}|}$
on $G_{r\times r}$ approximates $d$, i.e., $Q_\Theta(g_{ij}) \approx d_{ij}$ for all $1\leq i,j\leq r$.
We model $Q_\Theta$ in terms of \emph{Chebyshev polynomials of first kind} well known to provide excellent approximation properties  \cite{Trefethen2019,REG_arxiv}:
\begin{equation}\label{PSM}
   Q_{\Theta}(x_1,x_2) = \sum\limits_{\alpha\in A_{2,n}}\theta_\alpha T_{\alpha_1}(x_1)T_{\alpha_2}(x_2)\,.
\end{equation}
The expansion is computed due to standard least-square fits:
\begin{equation}\label{eq:fit}
   \Theta_d = \mathrm{argmin}_{C \in \R^{|A_{2,n}|}}\|RC - d\|^2\,,
\end{equation}
where
$R=(T_{\alpha}(g_{ij}))_{1\leq i,j,\leq n, \alpha \in A_{2,n}} \in \R^{r^2 \times |A_{2,n}|}$, $T_\alpha = T_{\alpha_1}\cdot T_{\alpha_2}$
denotes the regression matrix.

Given that each image (training point) $d \in D_{\mathrm{train}}$ can be approximated with the same polynomial degree $n \in \N$,
we posterior train an autoencoder $\varphi,\nu)$, only acting on the polynomial coefficient space $\varphi : \R^{|A_{2,n}|} \lo \Omega_{m_1}$,
$\nu \Omega_{m_1} \lo \R^{|A_{2,n}|}$   by exchanging
the loss in Eq.~\eqref{Rec_Loss} due to
\begin{equation}\label{Rec_Loss_hyb}
  \Lc_0^*(D_{\mathrm{train}},u,w)=\sum_{d \in D_{\mathrm{train}}}\|d -R\cdot \nu(\varphi(\Theta_d,u),w)\|^2
\end{equation}

In contrast to the regularisation loss in Definition~\ref{def:REGL}, here,  pre-encoding the training data due to polynomial regression decreases the input dimension $m_2 \in \N$ of the (NN) encoder $\varphi : \Omega_{m_2} \lo \Omega_{m_1}$.
In practice, this enables to reach low dimensional latent dimension by increasing the reconstruction quality, as we demonstrate in the next section.

\section{Numerical Experiments}\label{sec:num}
We executed experiments, designed to validate our theoretical results, on  {\sc hemera} a NVIDIA V100 cluster at HZDR. Complete code benchmark sets and supplements are available at  https://github.com/casus/autoencoder-regularisation.
The following AEs were applied:

\begin{enumerate}[left=0pt,label=\textbf{B\arabic*)}]
    \item\label{b1} \emph{Multilayer perceptron autoencoder  (MLP-AE)}: Feed forward NNs with activation functions $\sigma(x) = \sin(x)$.

    \item \emph{Convolutional autoencoder (CNN-AE)}: Standard convolutional neural networks (CNNs) with activation functions $\sigma(x) = \sin(x)$, as discussed in \ref{r3}.

    \item \emph{Variational autoencoder}: MLP based (MLP-VAE) and CNN based (CNN-VAE) as in \cite{kingma2013auto,burgess2018understanding}, discussed in \ref{r2}.
    \item  \emph{Contractive autoencoder (ContraAE)}: MLP based with with activation functions $\sigma(x) = \sin(x)$ as in \cite{rifai2011higher,rifai2011contractive}, discussed in \ref{r1}.
    \item  \emph{Regularised autoencoder (AE-REG)}:
    MLP based, as in \ref{b1}, trained with respect to the regularisation loss from Definition~\ref{def:REGL}.
    \item \emph{Hybridised AE (Hybrid AE-REG)}: MLP based, as in \ref{b1}, trained with respect to the modified loss in Definition~\ref{def:REGL} due to Eq.~\eqref{Rec_Loss_hyb}.
\end{enumerate}

The choice of activation functions $\sigma(x) = \sin(x)$
yields a natural way for normalising the latent encoding to $\Omega_m$ and performed best compared to trials with ReLU, ELU or $\sigma(x)=\tanh(x)$.
The regularisation of AE-REG and Hybrid AE-REG is realised due to sub-sampling batches from the Legendre grid $P_{m,n}$ for each iteration and computing the autoencoder Jacobians due to automatic differentiation \cite{baydin2018}.

\begin{figure}
         \centering
         \includegraphics[width=0.2\textwidth]{./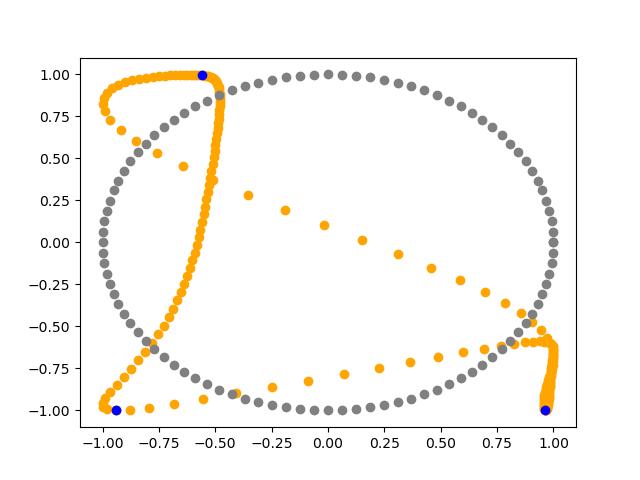}
         \includegraphics[width=0.2\textwidth]{./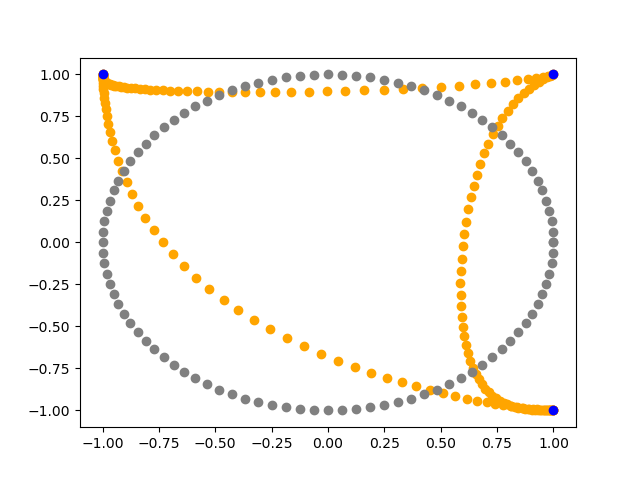}
         \caption{circle reconstruction: CNN-AE (left), CNN-VAE (right)}
         \label{circ1}
     \end{figure}
\begin{figure}
         \centering
         \includegraphics[width=0.2\textwidth]{./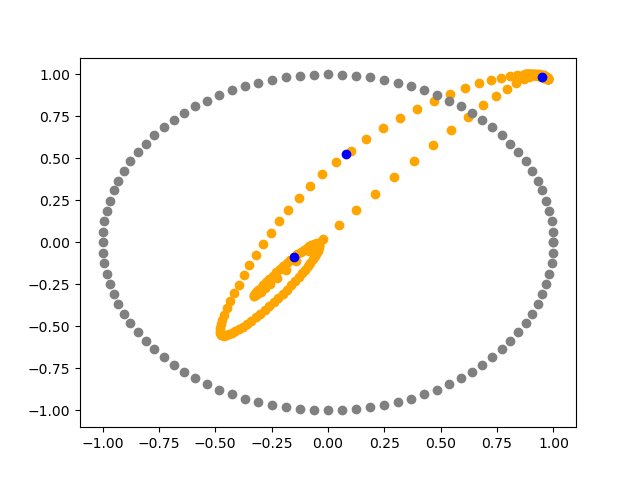}
         \includegraphics[width=0.2\textwidth]{./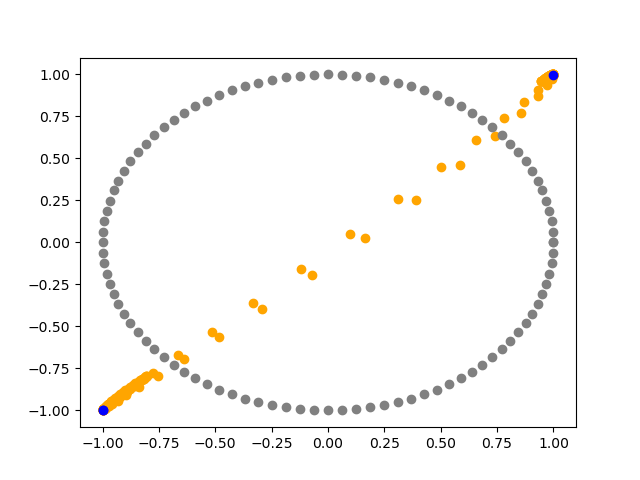}
         \caption{circle reconstruction:  ContraAE (left), MLP-VAE (right)}
         \label{circ2}
     \end{figure}
     \begin{figure}
          \centering
          \includegraphics[width=0.2\textwidth]{./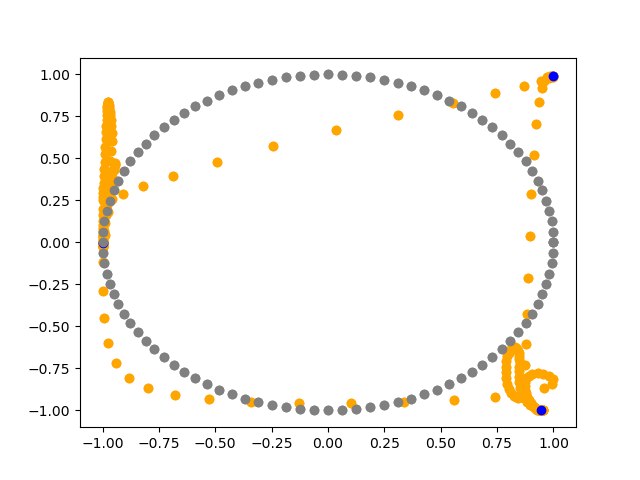}
         \includegraphics[width=0.2\textwidth]{./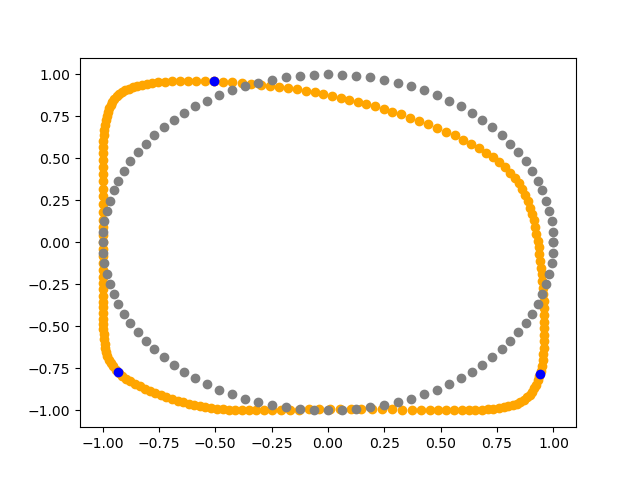}
         \caption{circle reconstruction: MLP-AE (left), AE-REG (right).}
          \label{circ3}
     \end{figure}

\subsection{Topological data-structure preservation}

Inspired by Fig.~\ref{Fig:DM}, we start by validating Theorem~\ref{theo:AE} for known data manifold topologies.

\begin{experiment}[Cycle reconstructions in dimension 15]  We consider the unit circle $S^1 \subseteq \R^2$, a uniform random matrix $A \in \R^{15,2}$ with entries in $[-2,2]$ and the data manifold $\Dc = \{Ax : x \in S^1\subseteq \R^2\}$, being an ellipse embedded along some $2$-dimensional hyperplane $H_A=\{Ax : x\in \R^2\} \subseteq \R^{15}$. Due to Bezout's Theorem
\cite{bezout1779,fulton1974}, a $3$-points sample uniquely determines an ellipse in a $2$-dimensional plane. Therefore, we  executed the AEs for this minimal case of a set of random samples  $|D_{\mathrm{train}}|=3$, $D_{\mathrm{train}} \subseteq \Dc$ as training set.
\label{exp:cyc}
\end{experiment}

MLP-AE, MLP-VAE, and AE-REG consists of 2 hidden linear layers (in the encoder and decoder), each of length 6. The degree of the Legendre grid $P_{m,n}$ used for the regularisation of AE-REG was set to $n=21$, Definition~\ref{def:REGL}.  CNN-AE and CNN-VAE consists of 2 hidden convolutional layers with kernel size 3, stride of 2 in the first hidden layer and 1 in the second, and 5 filters per layer. The resulting parameter spaces $\Upsilon_{\Xi_{15,2}}$ are all of similar size: $|\Upsilon_{\Xi_{15,2}}| \sim  400$.
All AEs were trained with the Adam optimizer \cite{kingma2014}.

 Representative results out of 6 repetitions are shown in Fig.~\ref{circ1}-\ref{circ3}. Only AE-REG delivers a feasible 2D re-embedding, while all other AEs cause overlappings or cycle-crossings. More examples are given in the supplements; whereas AE-REG delivers similar reconstructions for all other trials while the other AEs fail in most of the cases.

Linking back to our initial discussions of ContraAE \ref{r1}: The results show that the ambient domain regularisation formulated for the ContraAE,
is insufficient for guaranteeing a one-to-one encoding.
Similarily, CNN-based AEs cause self-intersecting points. As initially discussed in \ref{r3}, CNNs are invertible for a generic setup \cite{gilbert2017}, but seem to fail sharply separating tangent $T\Dc$ and perpendicular direction $T\Dc^\perp$ of the data manifold $\Dc$.

We demonstrate the impact of the regularisation to not belonging to an edge case by considering the following scenario:

\begin{experiment}[Torus reconstruction] Following the experimental design of Experiment~\ref{exp:cyc} we generate challenging tori embeddings of a squeezed torus with radii $ 0<r,R$, $r=0.7$ $R= 2.0$ in dimension $m=15$ and dimension $m=1024$ due to multiplication with random matrices
$A \in [-1,1]^{m\times 3}$. We randomly sample 50 training points and seek for their 3D re-embedding due to the AEs. For the low dimensional case the Legendre regularisation grid $P_{m,n}$ of degree $n=21$ is chosen, while for the high-dimensional task, $m=1024$, we choose $n=51$.
\label{exp:tor}
\end{experiment}

Show-cases for dimension $m=15$ are given in Fig.~\ref{torus1}-\ref{torus3}, visualised by a dense set of $2000$ test points.
As in Experiment~\ref{exp:cyc} only AE-REG is capable for re-embedding the torus in a feasible way.

CNN-AE and CNN-VAE flatten the torus, again non-preserving the torus topology. Apart from AE-REG all other AEs fail to provide a re-embedding.
For the high-dimensional task $m=1024$,
CNN-AE and CNN-VAE flatten, stretch or squeeze the torus, loosing its "third dimension". MLP-VAE, and MLP-AE show self-intersections, while ContraAE completely collapses.

Summarising the results suggests that only AE-REG is capable of preserving the data topology.
We continue our evaluation to give further evidence on this expectation.

\begin{figure}[t!]
    \centering
         \includegraphics[width=0.2\textwidth]{./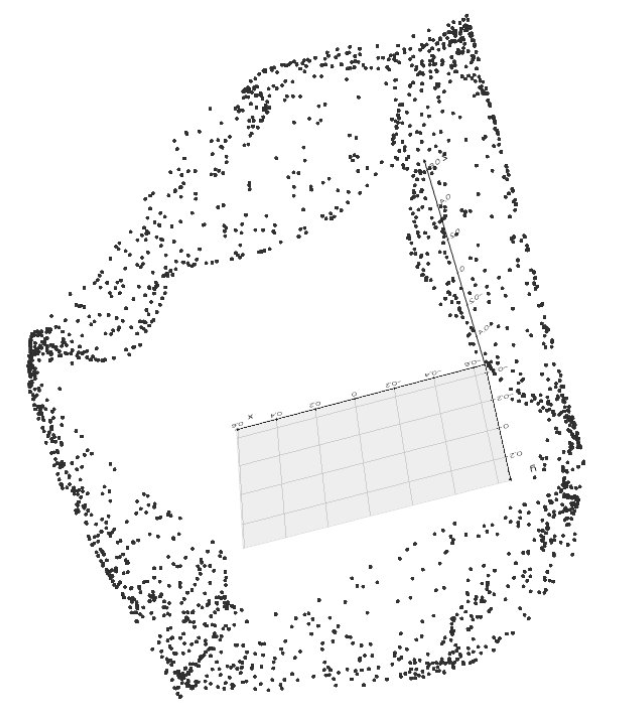}
         \includegraphics[width=0.2\textwidth]{./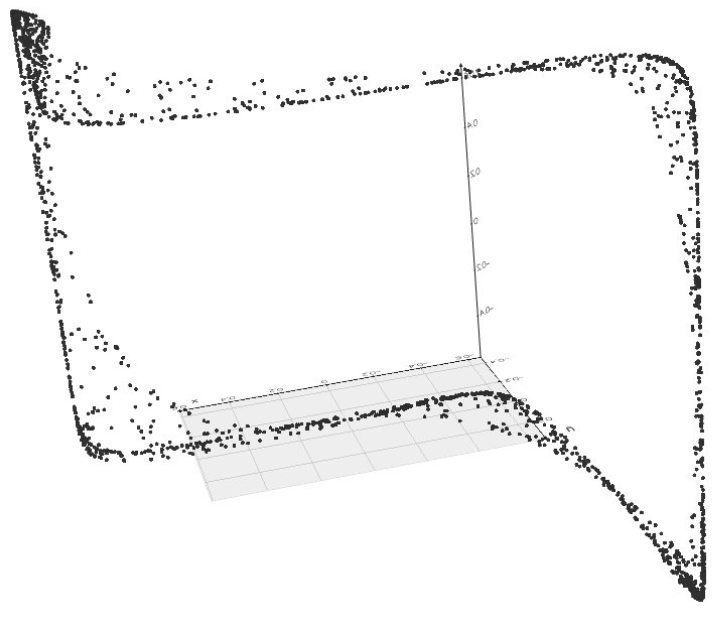}
         \caption{torus reconstruction, $\dim =15$: CNN-AE (left), CNN-VAE (right)}
         \label{torus1}
\end{figure}
\begin{figure}[t!]
    \centering
         \includegraphics[width=0.2\textwidth]{./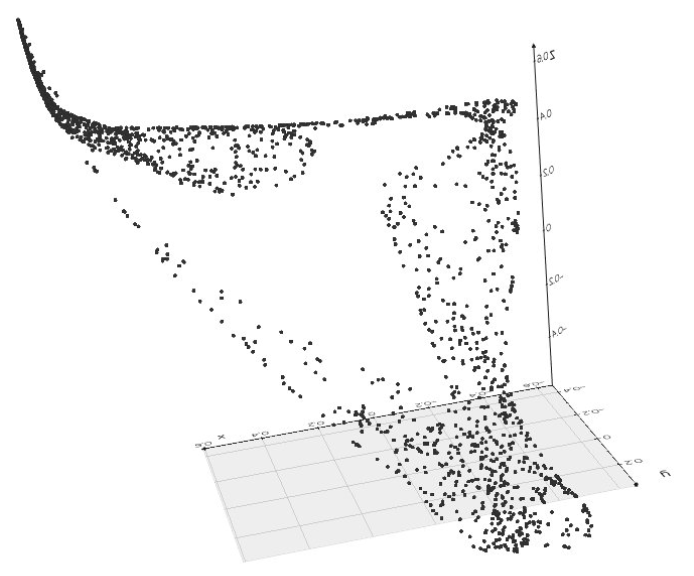}
         \includegraphics[width=0.2\textwidth]{./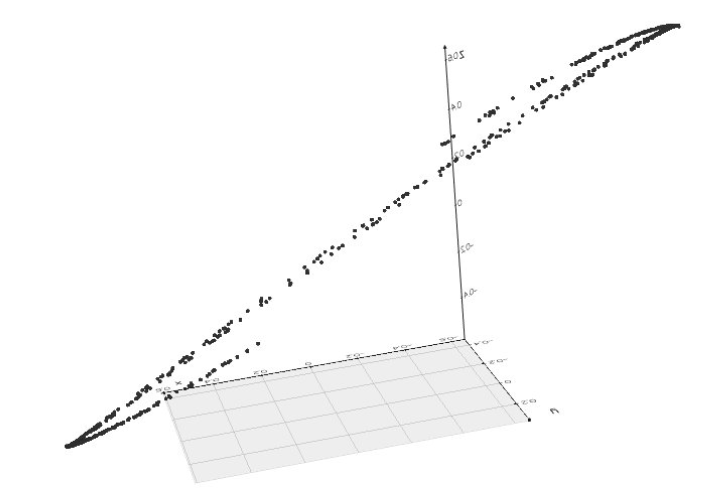}
         \caption{torus reconstruction, $\dim =15$: ContraAE(left), MLP-VAE (right)}
         \label{torus2}
\end{figure}
\begin{figure}[t!]
    \centering
         \includegraphics[width=0.2\textwidth]{./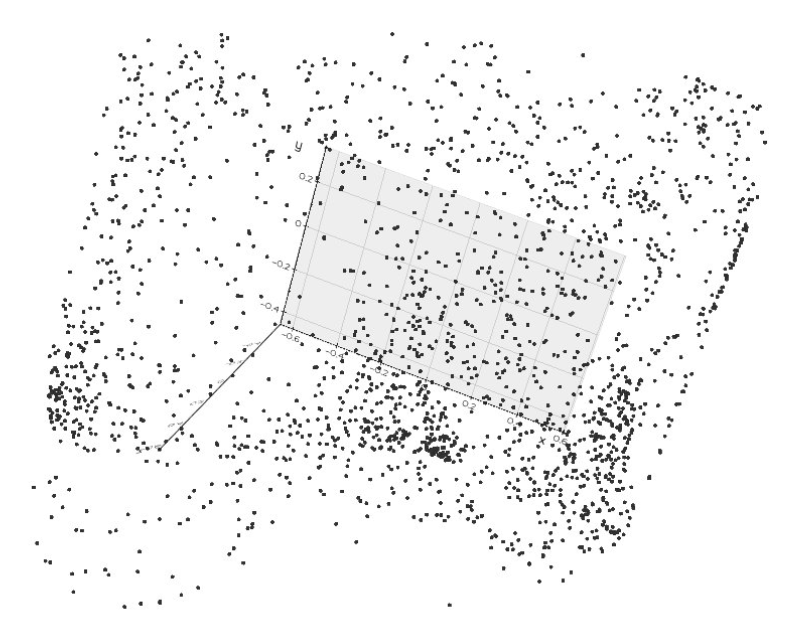}
         \includegraphics[width=0.2\textwidth]{./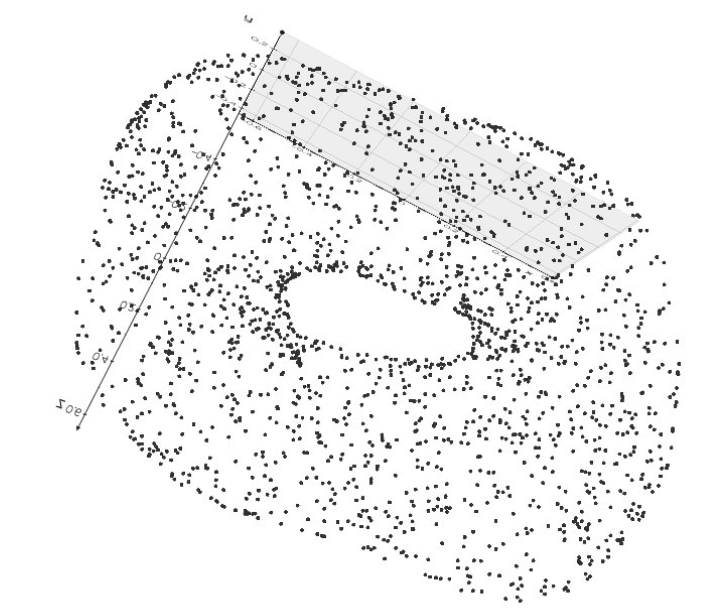}
         \caption{torus reconstruction, $\dim =15$: MLP-AE (left), AE-REG (right)}
         \label{torus3}
\end{figure}

\subsection{Autoencoder compression for FashionMNIST}
We continue by investigating the AE performances for the classic FashionMNIST dataset  \cite{xiao2017/online}.

\begin{experiment}[FashionMNIST compression]\label{exp:FM}
The 70 000 FashionMNIST images separated into 10 fashion classes (T-shirts, shoes etc.) being of $32\times32 =1024$-pixel resolution (ambient domain dimension). For providing a challenging competition we reduced the dataset to $24000$ uniformly sub-sampled images and trained the AEs for $40\%$ training data and complementary test data, respectively. Here, we consider latent dimensions $m=4,10$. Results of further runs for  $m=2,4,6,8,10$ are given in the supplements.
\end{experiment}

\begin{figure}[t!]
    \centering
         \includegraphics[width=0.2\textwidth]{./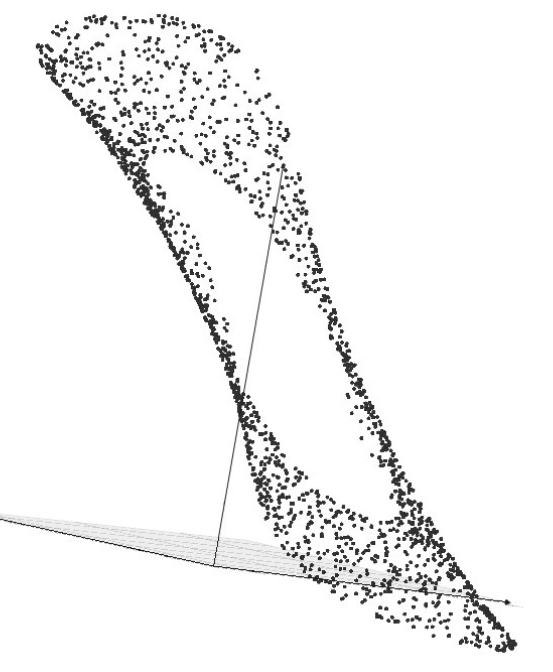}
         \includegraphics[width=0.2\textwidth]{./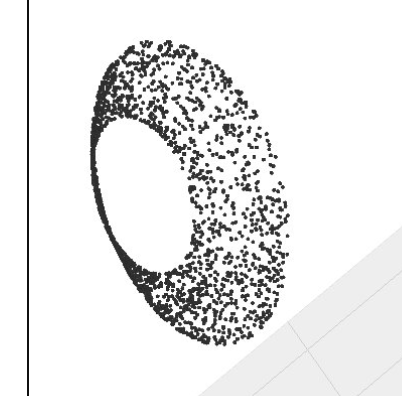}
         \caption{torus reconstruction, $\dim =1024$: CNN-AE (left), CNN-VAE (right)}
         \label{torus4}
\end{figure}
\begin{figure}[t!]
    \centering
         \includegraphics[width=0.2\textwidth]{./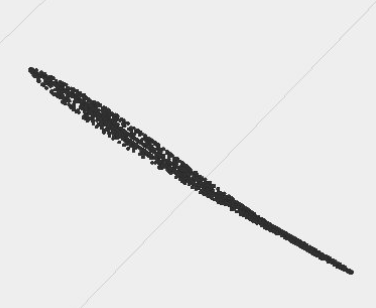}
         \includegraphics[width=0.2\textwidth]{./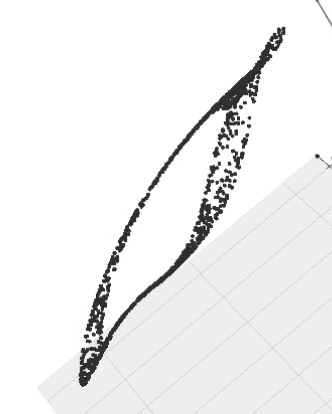}
         \caption{torus reconstruction, $\dim =1024$: ContraAE(left), MLP-VAE (right)}
         \label{torus5}
\end{figure}
\begin{figure}[t!]
    \centering
         \includegraphics[width=0.2\textwidth]{./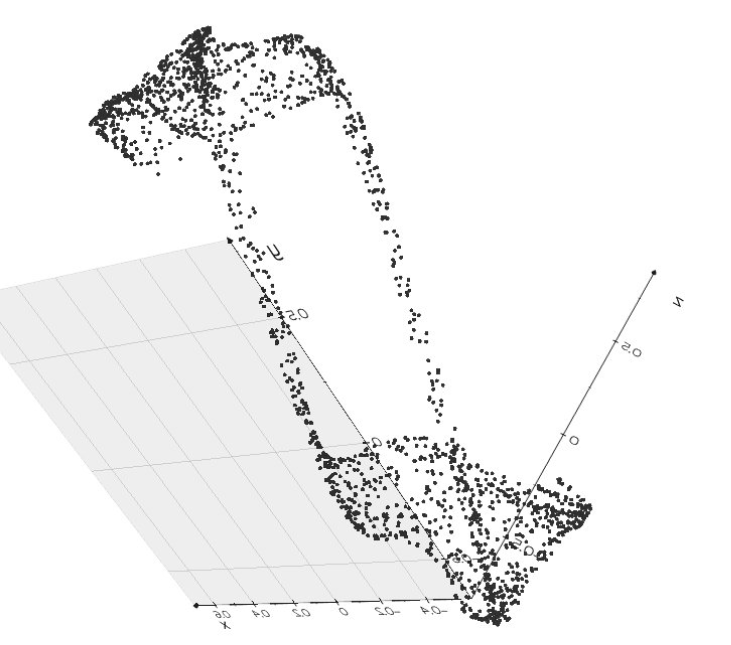}
         \includegraphics[width=0.2\textwidth]{./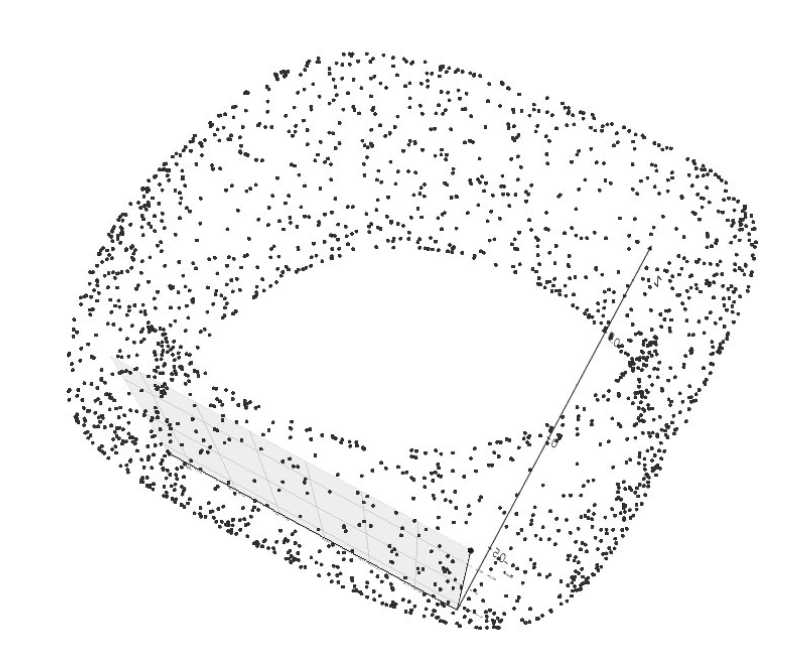}
         \caption{torus reconstruction, $\dim =1024$: MLP-AE (left), AE-REG (right)}
         \label{torus6}
\end{figure}

MLP-AE, MLP-VAE, AE-REG and Hybrid AE-REG consists of 3 hidden layers, each of length 100. The degree of the Legendre grid $P_{m,n}$ used for the regularisation of AE-REG was set to $n=21$, Defintion~\ref{def:REGL}.  CNN-AE and CNN-VAE consists of 3 convolutional layers with kernel size 3, stride of 2. The resulting parameter spaces $\Upsilon_{\Xi_{15,2}}$ of all AEs are of similar size. Further details of the architectures are reported in the supplements.

We evaluated the reconstruction quality with respect to peak-signal-to-noise-ratios (PSNR) for perturbed test data due   $0\%,10\%,20\%,50\% $ of Gaussian noise encoded to latent dimension $m=10$, and plot them in Fig.~\ref{FM1} and Fig.~\ref{FM2}.  The choice $m=10$, here, reflects the number of FashionMNIST-classes.

\begin{figure}
         \centering
         \includegraphics[width=0.225\textwidth]{./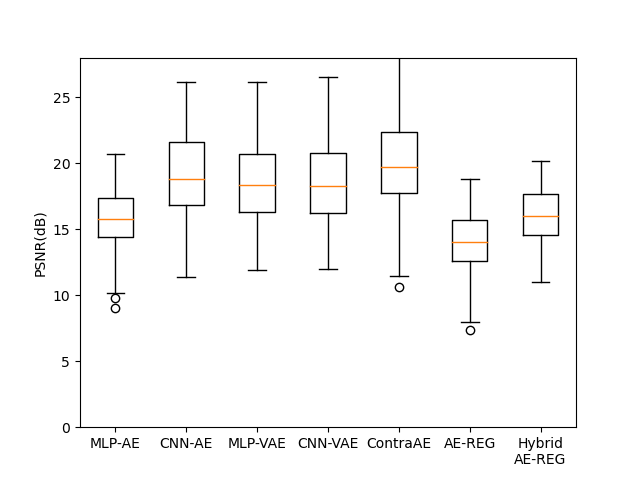}
         \includegraphics[width=0.225\textwidth]{./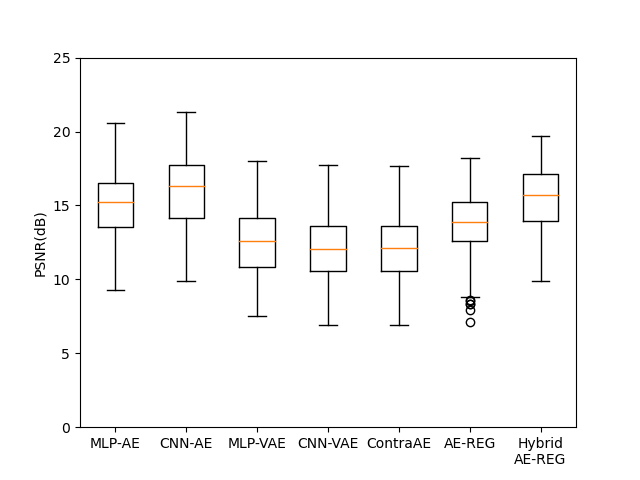}
         \caption{FashionMNIST reconstruction, latent dimension $\dim =10$: without noise(left) $10\%$ Gaussian noise (right)}
         \label{FM1}
\end{figure}
\begin{figure}[t!]
         \centering
         \includegraphics[width=0.225\textwidth]{./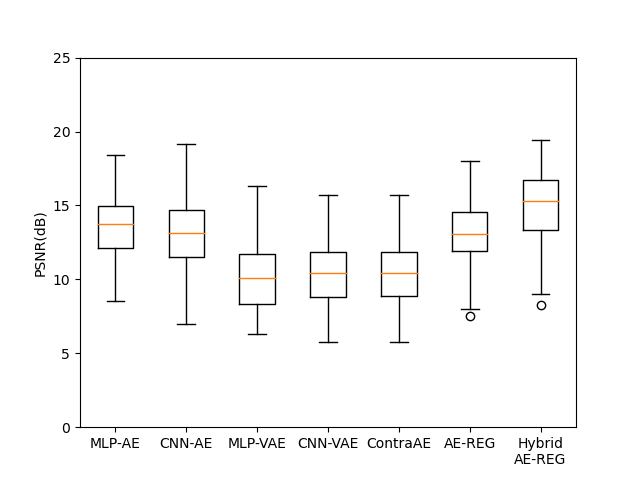}
         \includegraphics[width=0.225\textwidth]{./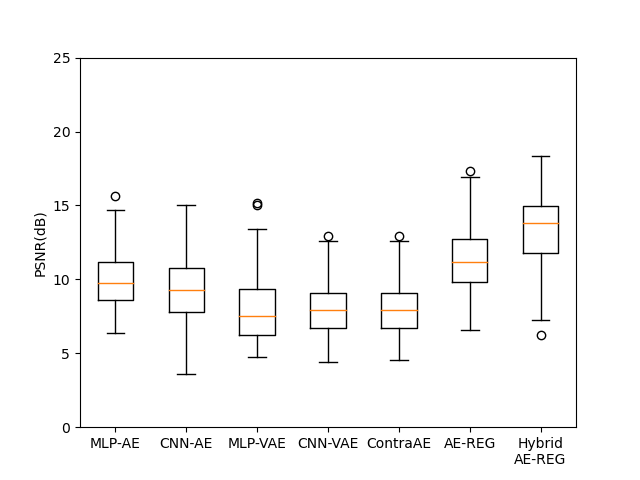}
        \caption{FashionMNIST reconstruction, latent dimension $\dim =10$: $20\%$ Gaussian noise (left) $50\%$ Gaussian noise (right)}
        \label{FM2}
\end{figure}

\begin{figure}[!t]
  \centering
 \includegraphics[width=0.5\textwidth]{./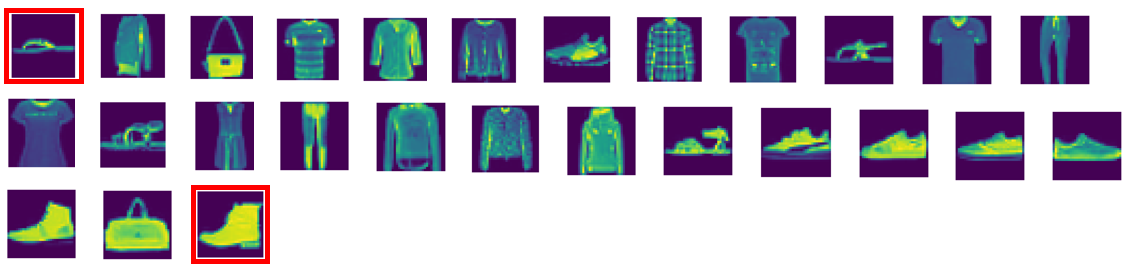}
 \caption{FashionMNIST goedesic in latent dimension $\dim =4$ of MLP-AE}
 \label{FG1}
\end{figure}
\begin{figure}[!t]
 \centering
\includegraphics[width=0.5\textwidth]{./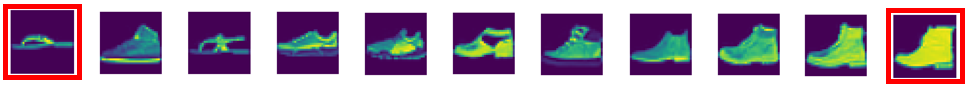}
         \caption{FashionMNIST goedesic in latent dimension $\dim =4$ of CNN-AE}
\end{figure}
\begin{figure}[!t]
  \centering
  \includegraphics[width=0.5\textwidth]{./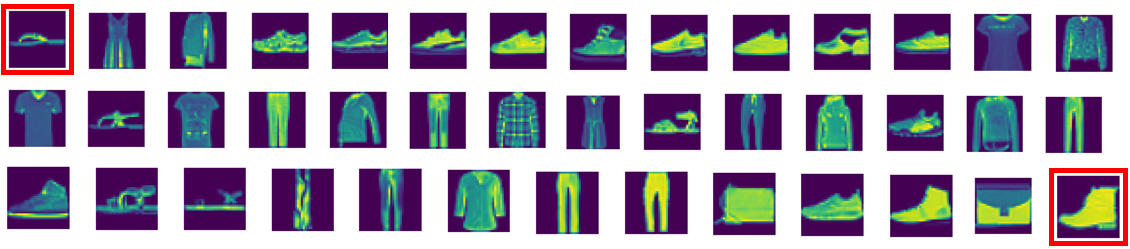}
 \caption{FashionMNIST goedesic in latent dimension $\dim =4$ of MLP-VAE}
       \end{figure}
\begin{figure}[!t]
  \centering
  \includegraphics[width=0.5\textwidth]{./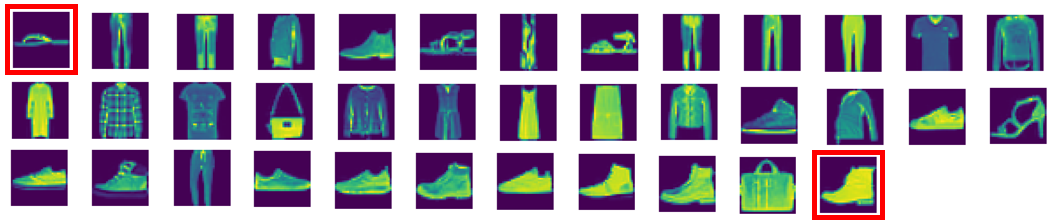}
 \caption{FashionMNIST goedesic in latent dimension $\dim =4$ of CNN-VAE}
       \end{figure}
\begin{figure}[t!]
  \centering
  \includegraphics[width=0.5\textwidth]{./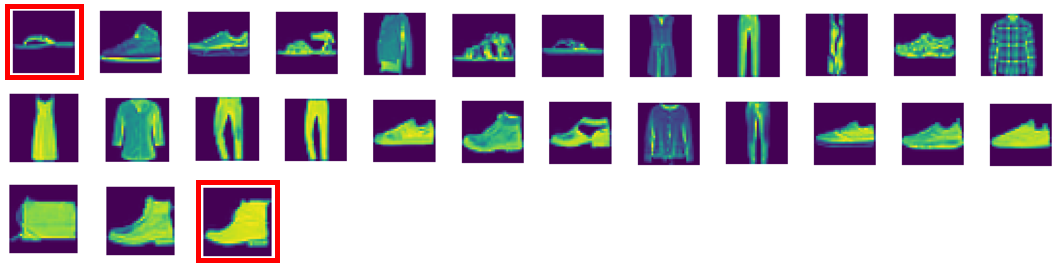}
\caption{FashionMNIST goedesic in latent dimension $\dim =4$ of ContraAE}
       \end{figure}
\begin{figure}[!t]
  \centering
  \includegraphics[width=0.5\textwidth]{./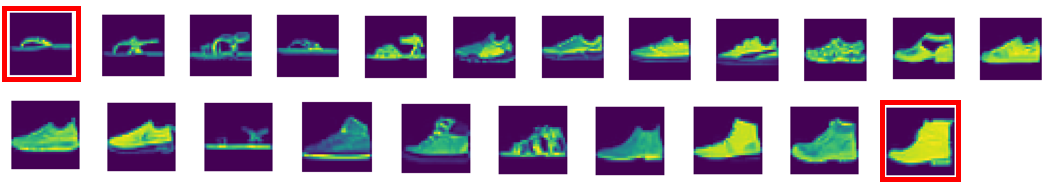}
  \caption{FashionMNIST goedesic in latent dimension $\dim =4$ of AE-REG}
       \end{figure}
\begin{figure}[!ht]
  \centering
 \includegraphics[width=0.5\textwidth]{./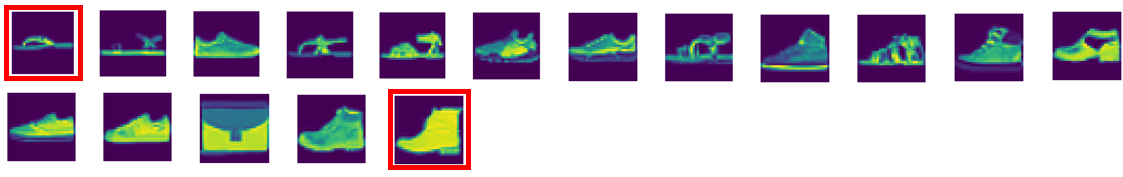}
 \caption{FashionMNIST goedesic in latent dimension $\dim =4$ of Hybrid AE-REG}
 \label{FGN}
\end{figure}

While Hybrid AE-REG performs compatible to MLP-AE and worse than the other AEs in the non-perturbed case, its superiority appears already for perturbations with $10\%$ of Gaussian noise and exceeds the reached reconstruction quality of all other AEs
for $20\%$ Gaussian noise or more. We want to stress that Hybrid AE-REG maintains its reconstruction quality throughout the noise perturbations (up to $70\%$, see the supplements). This outstanding appearance of robustness gives strong evidence on the impact of the regularisation and well-designed pre-encoding technique due to the hybridisation with polynomial regression.
Analogue results appear when measuring the reconstruction quality with respect to the structural similarity index measure (SSIM), given in the supplements.

In Fig.~\ref{fig:FM_showcase}  show cases of the reconstructions are illustrated, including in addition vertical and horizontal flip perturbations. Apart from AE-REG and Hybrid~AE-REG (rows (7) and (8)) all other AEs flip the FashionMNIST label-class for reconstructions of images with $20\%$ or $50\%$ of Gaussian noise. Flipping the label-class is the analogue to topological defects as cycle crossings appeared for the non-regularised AEs in Experiment~\ref{exp:cyc}, indicating again that the latent representation of the FashionMNIST dataset given due to the non-regularised AEs does not maintain structural information.

\begin{figure}[!t]
         \centering
         \includegraphics[width=0.225\textwidth]{./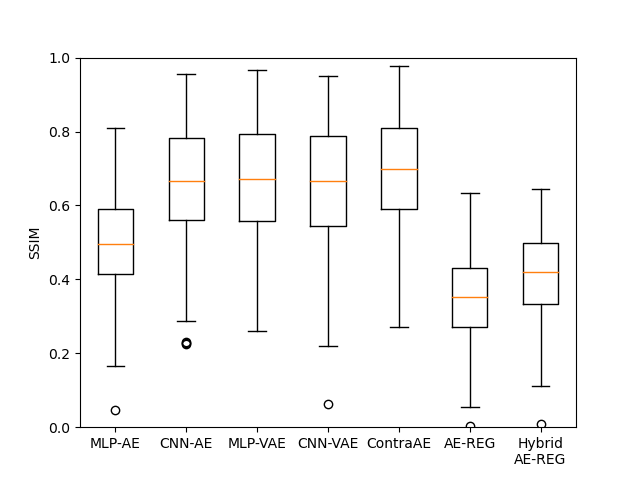}
         \includegraphics[width=0.225\textwidth]{./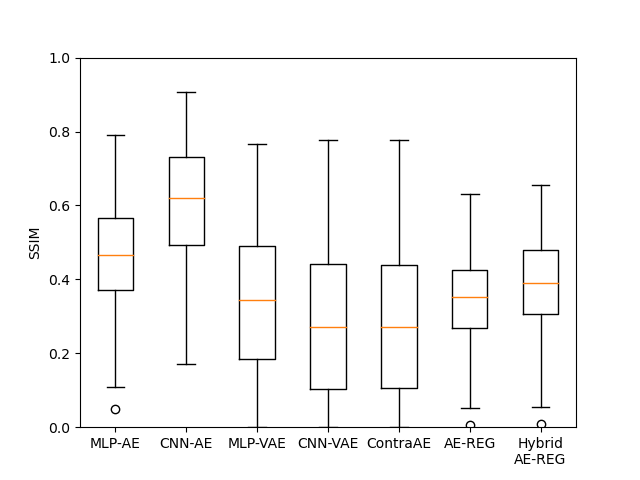}
         \caption{FashionMNIST reconstruction, latent dimension $\dim= 4$: without noise(left), $10\%$ Gaussian noise (right)}
         \label{FM3}
\end{figure}
\begin{figure}[!t]
         \centering
         \includegraphics[width=0.225\textwidth]{./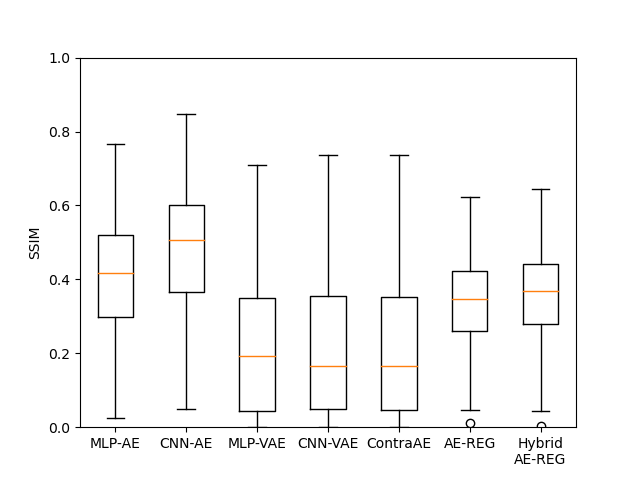}
         \includegraphics[width=0.225\textwidth]{./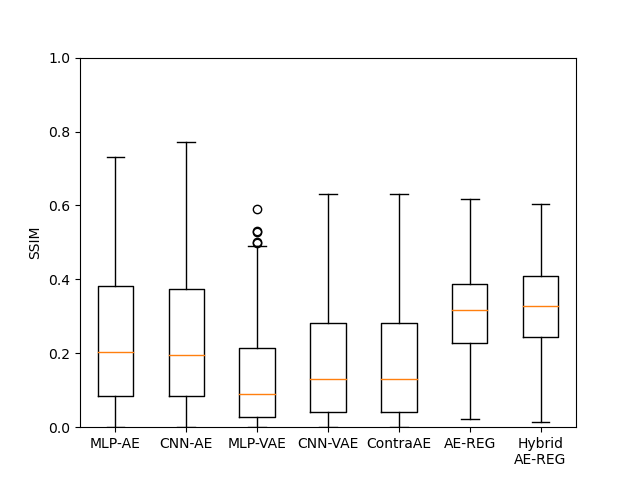}
        \caption{FashionMNIST reconstruction latent dimension $\dim =4$: $20\%$ Gaussian noise (left), $50\%$ Gaussian noise (right)}
        \label{FM4}
\end{figure}

\begin{figure*}[!ht]
     \vspace{-12pt}
         \includegraphics[width=0.49\textwidth]{./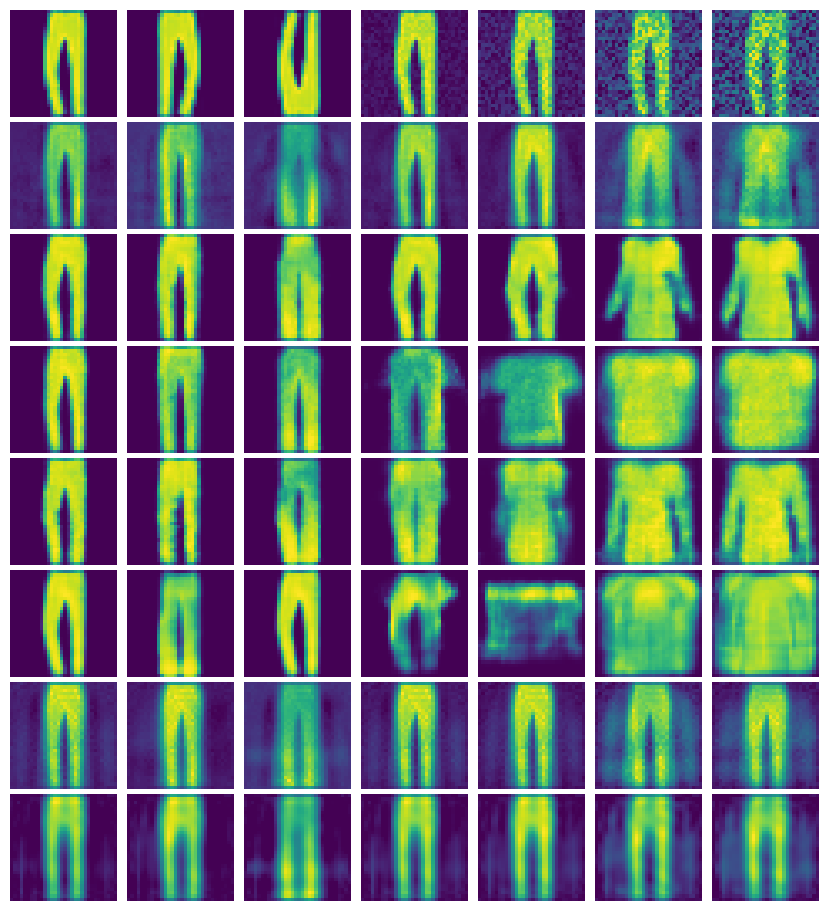}
                  \includegraphics[width=0.49\textwidth]{./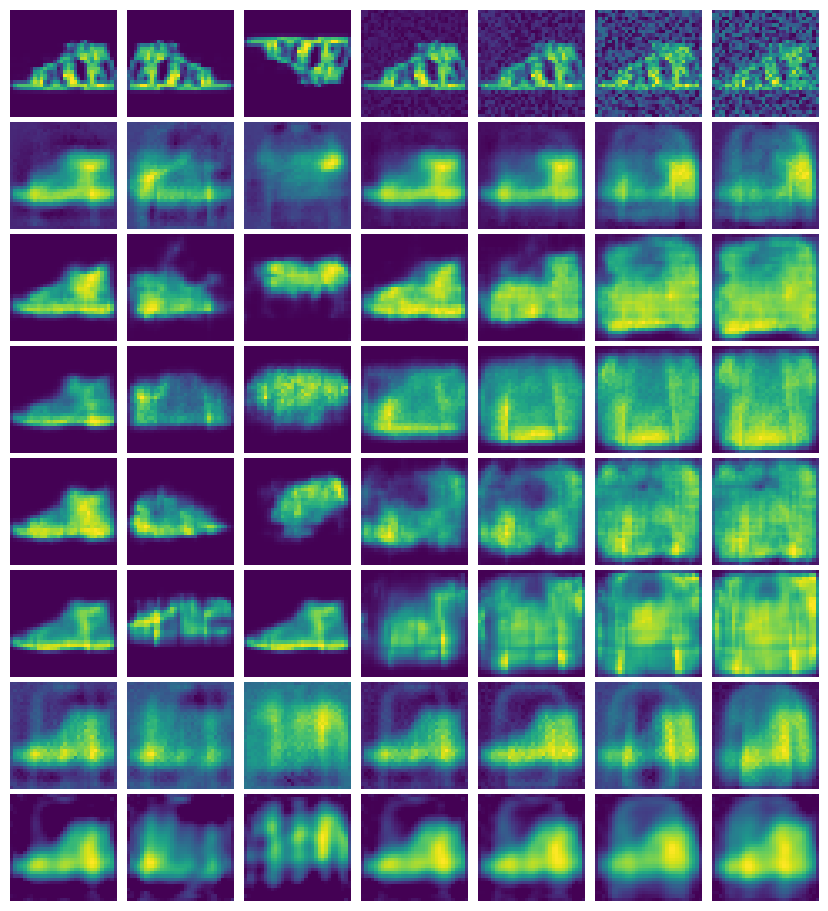}
         \caption{Two show cases of FashionMNIST reconstruction for latent dimension $m=10$. First row shows the input image with vertical, horizontal flips, and $0\%,10\%,20\%,50\%,70\% $ of Gaussian noise. Rows beneath show the results of (2) MLAP-AE, (3) CNN-AE, (4) MLP-VAE, (5) CNN-VAE, (6) ContraAE, (7) AE-REG, and (8) Hybrid AE-REG.}\label{fig:FM_showcase}
\end{figure*}

       Fig.~\ref{FM3},Fig.~\ref{FM4}  compare the reconstruction quality of the AEs for latent dimension $m=4$ with respect to SSIM. As before AE-REG and Hybrid AE-REG show worse performance than the other AEs for the clean images, but behave better than the other AEs for Gaussian noise perturbations. While SSIM correlates with the FashionMNIST classes stronger than the PSNR metric, the statistics here show that flipping the  FashionMNIST class (as in Fig.~\ref{fig:FM_showcase}) is prevented best by the regularised AEs.

       Fig.~\ref{FG1}-\ref{FGN} provide show cases of decoded latent-geodesics w.r.t. latent dimension $m=4$, connecting two AE-latent codes of the encoded test data that has been initially perturbed by
       $50\%$ Gaussian noise before encoding.  The geodesics are computed as shortest paths for
       an early Vietoris-Rips filtration \cite{moor2020topological} that contains the endpoints in one common connected component. More examples are given in the supplements.

       Note that while encoding is asked to preserve the topology but  not the geometry (might be non-conformal, distances and angles may change) shortest paths are not necessarily feasible geodesic approximates, which motivated our choice of the Vietoris-Rips filtration.

       The question we here pose is whether the geodesic connects the endpoints along the curved encoded data manifold $\Dc'=\varphi(\Dc)$ or includes forbidden short-cuts through $\Dc'$.
       Apart from CNN-AE and AE-REG all other geodesics contain latent codes of images belonging to another FashionMNIST-class, while for Hybrid AE-REG this happens just once. We interpret these appearances as forbidden short-cuts indicating that the latent representations
       violate the  requirement of topological preservation.

       AE-REG delivers a smoother transition between the endpoints than CNN-AE, suggesting that though the CNN-AE geodesic is shorter,  the regularised AEs preserve
       the topology with higher resolution.

       \begin{figure*}[t!]
                \includegraphics[width=1.0\textwidth]{./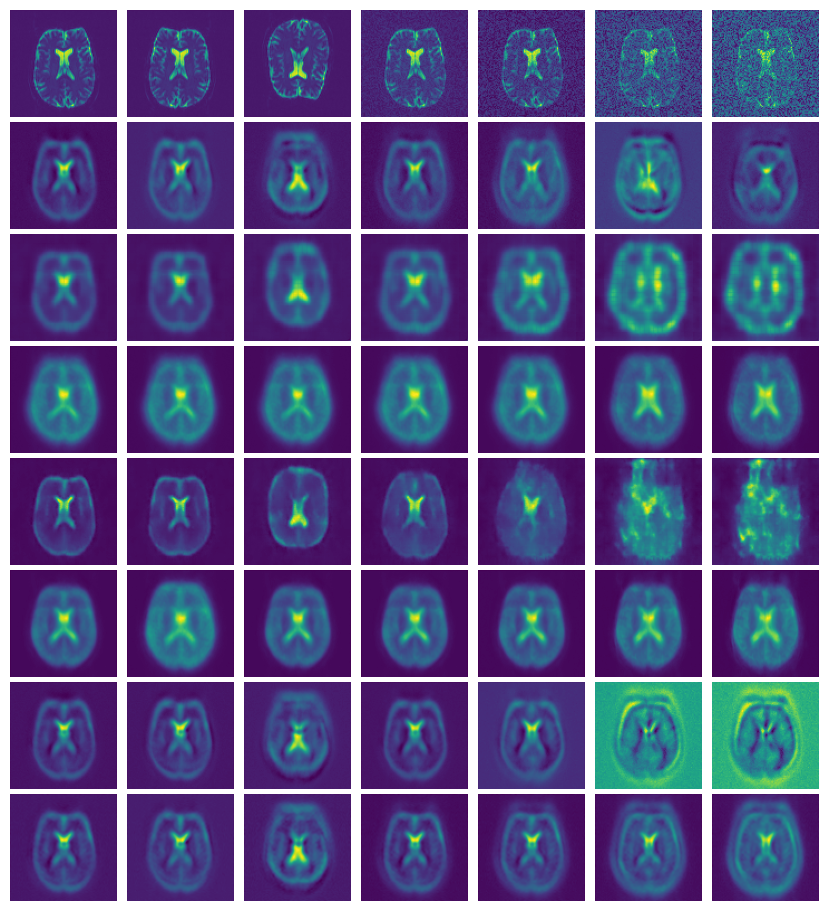}
                \caption{MRI show case. First row shows the input image with vertical, horizontal flips, and $0\%,10\%,20\%,50\%,70\% $ of Gaussian noise. Rows beneath show the results of (2) MLAP-AE, (3) CNN-AE, (4) MLP-VAE, (5) CNN-VAE, (6) ContraAE, (7) AE-REG, and (8) Hybrid AE-REG.}\label{fig:MRI_show}
       \end{figure*}

       \begin{figure}[t!]
                \centering
                \includegraphics[width=0.225\textwidth]{./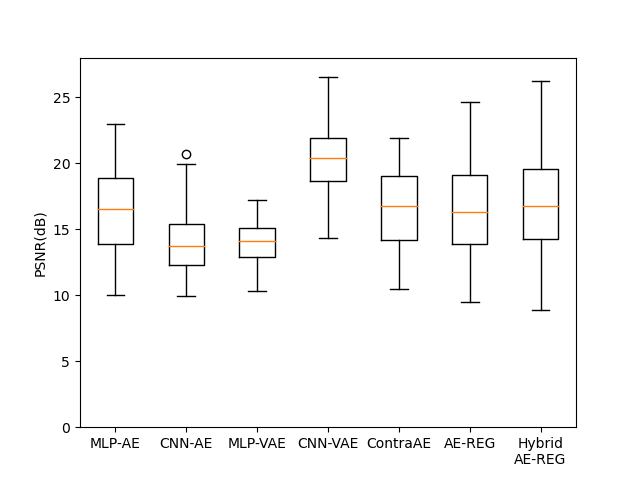}
                \includegraphics[width=0.225\textwidth]{./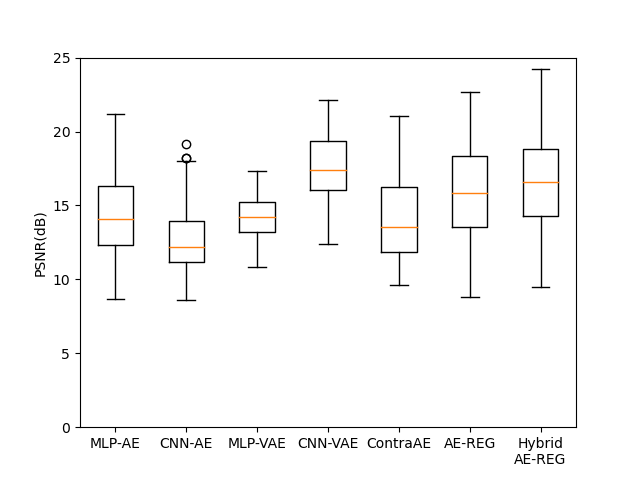}
               \caption{MRI reconstruction latent dimension $\dim =40$: without noise (left), $10\%$ Gaussian noise (right)}
               \label{MRI1}
       \end{figure}
       \begin{figure}[t!]
                \centering
                \includegraphics[width=0.225\textwidth]{./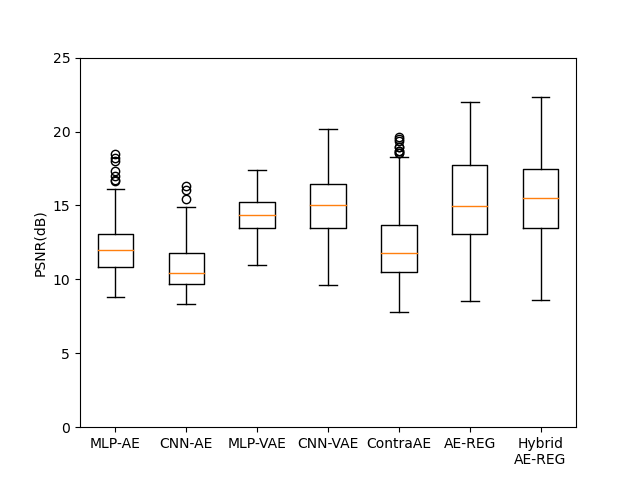}
                \includegraphics[width=0.225\textwidth]{./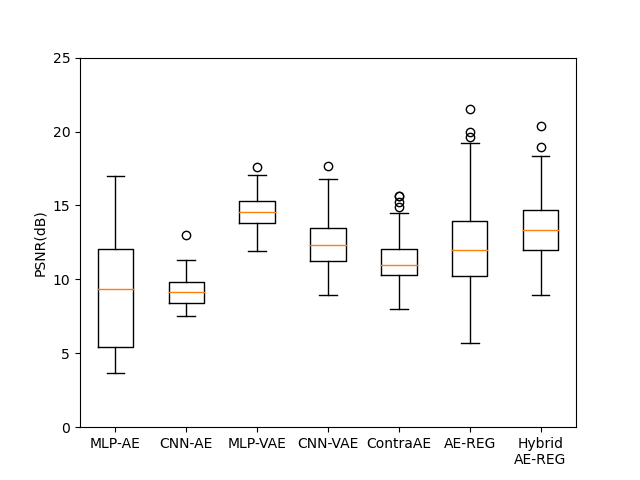}
               \caption{MRI reconstruction latent dimension $\dim =40$:  $20\%$ Gaussian noise (left), $50\%$ Gaussian noise (right)}
               \label{MRI2}
       \end{figure}

       \begin{figure}[!t]
         \centering
      \includegraphics[width=0.425\textwidth]{./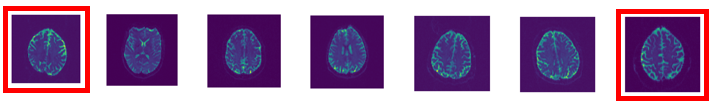}
        \caption{MRI goedesic in latent dimension $\dim =40$ of AE-REG without noise}
        \label{MG1}
       \end{figure}
       \begin{figure}[!t]
        \centering
       \includegraphics[width=0.5\textwidth]{./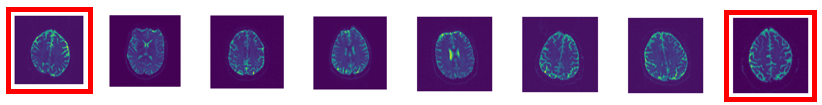}
       \caption{MRI goedesic in latent dimension $\dim =40$ of AE-REG with $10\%$ Gaussian noise}
       \end{figure}

       \begin{figure}[!t]
         \centering
         \includegraphics[width=0.325\textwidth]{./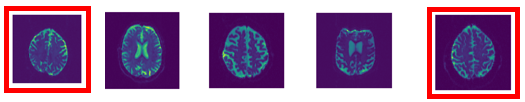}
         \caption{MRI goedesic in latent dimension $\dim =40$ of CNN-VAE without noise}
              \end{figure}
       \begin{figure}[!t]
         \centering
         \includegraphics[width=0.2\textwidth]{./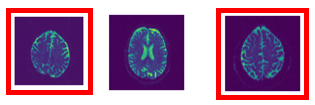}
           \caption{MRI goedesic in latent dimension $\dim =40$ of CNN-VAE with $10\%$ Gaussian noise}
              \end{figure}

       \begin{figure}[t!]
         \centering
         \includegraphics[width=0.2\textwidth]{./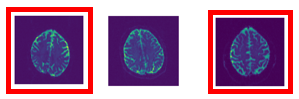}
   \caption{MRI goedesic in latent dimension $\dim =40$ of CNN-AE without noise}
              \end{figure}
       \begin{figure}[!t]
         \centering
         \includegraphics[width=0.5\textwidth]{./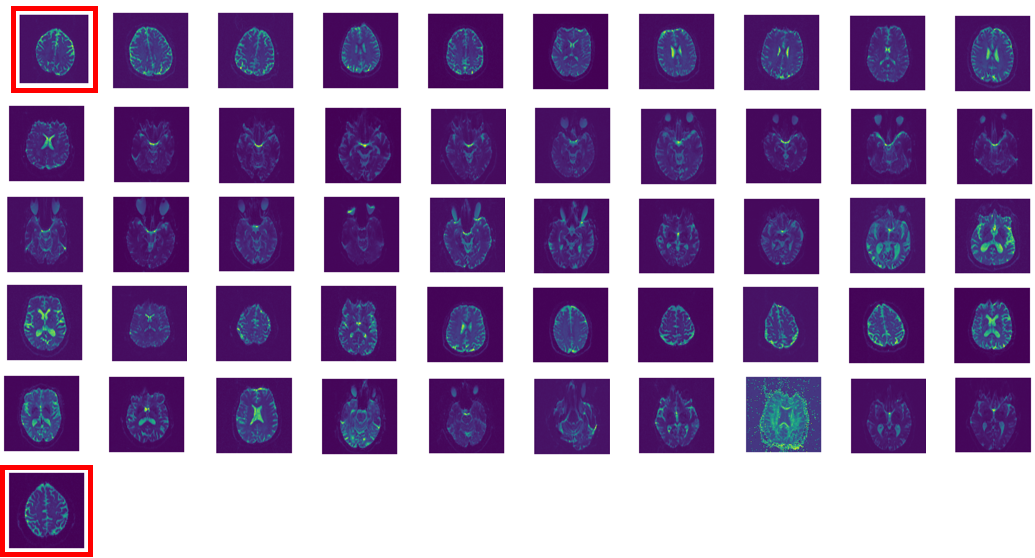}
           \caption{MRI goedesic in latent dimension $\dim =40$ of CNN-AE with $10\%$ Gaussian noise}
           \label{MGN}
        \end{figure}

       \subsection{Autoencoder compression for MRI brain scans}

For evaluating the potential impact of the hybridisation and regularisation technique  to high-dimensional \emph{"real-world-problems"} we conducted the following experiment.

       \begin{experiment}[MRI compression] We consider the MRI brain scans dataset from Open Access Series of Imaging Studies (OASIS) \cite{marcus2007open}. We extract two-dimensional slices from the three-dimensional MRI images, resulting  in $60.000$ images of resolution $64\times64$-pixels. We follow Experiment~\ref{exp:FM} by splitting the dataset into $40\%$ training images and complementary test images and compare the AE compression for latent dimension $m=40$. Results for latent dimension $m=10,15,20,40,60,70$ and $5\%,20\%,40\%,80\%$ training data are given in the supplements, as well as further details on the specifications.
       \end{experiment}

       We keep the architecture setup of the AEs, but increase the NN sizes to $5$ hidden layers each consisting of $1.000$ neurons. Reconstructions measured by PSNR are evaluated in Fig.~\ref{MRI1}, Fig.~\ref{MRI2}. Analogous results apper for SSIM, see the supplements.

       As in Experiment~\ref{exp:FM} we observe that AE-REG and Hybrid AE-REG perform compatible or slightly worse than the other AEs in the unperturbed scenario, but show their superiority over the other AEs for $10\%$ Gaussian noise, or $20\%$ for CNN-VAE. Especially Hybrid AE-REG maintains its reconstruction quality under noise perturbations up to $20\%$ (maintains stable for $50\%$). The performance increase compared to the un-regularised MLP-AE becomes evident and validates again that a  strong robustness is achieved due to the regularisation.

       A show case is given in Fig.~\ref{fig:MRI_show}. Apart from Hybrid AE-REG (row (8)) all AEs show artefacts when reconstructing perturbed images. CNN-VAE (row (4)) and AE-REG (row (7)) perform compatible and maintain stable up to $20\%$ Gaussian noise perturbation.

       In Fig.~\ref{MG1}-\ref{MGN} examples of geodesics are visualised, being computed  analogously as in Experiment~\ref{exp:FM} for the encoded images once without noise and once by adding $10\%$ Gaussian noise before encoding.  The AE-REG geodesic consists of similar slices, including one differing slice for $10\%$ Gaussian noise perturbation. CNN-VAE delivers a shorter path, however includes a strongly differing slice, which is kept for $10\%$ of Gaussian noise. CNN-AE provides a feasible geodesic in the unperturbed case, however becomes unstable in the perturbed case.

       We interpret the difference of the AE-REG to  CNN-VAE and CNN-AE as an indicator for delivering consistent latent representations on a higher resolution.
       While the CNN-AE and AE-REG geodesics indicate that one may trust the encoded latent representations, the CNN-AE encoding may not be suitable for reliable post-processing, such  as classification tasks. More show cases are given in the supplements, showing similar unstable behaviour of the other AEs.

       Summarising,
       the results validate once more regularisation and hybridisation to deliver reliable AEs that are capable for compressing real world datasets to low dimensional latent spaces by preserving their topology.
       How to extend the hybridisation technique to images or datasets of high resolution is one of the aspects we discuss in our concluding thoughts.

       \section{Conclusion}\label{sec:CON}
        \vskip6pt
       We delivered the mathematical theory for addressing encoding tasks of datasets being sampled from smooth data manifolds. Our insights condensed in an efficiently realisable regularisation constraint, resting on sampling the encoder Jacobian in Legendre nodes,  located in the latent space. We have proven the regularisation to guarantee a re-embedding of the data manifold under mild assumptions on the dataset. Consequently, the topological structure is preserved under regularised autoencoder compression.

       We want to stress that the regularisation is not limited to specific NN architectures, but already strongly impacts the performance for simple MLPs. Combinations with initially discussed vectorised AEs \cite{mitchell1980need,gordon1995evaluation} might extend and improve high-dimensional data analysis as in
       \cite{kobayashi2022self}.
       When combined with the proposed polynomial regression the hybridised AEs increase
       strongly in reconstruction quality. For addressing images of high resolution or multi-dimensional datasets, $\dim \geq 3$, as the un-sliced MRI brain scans,
       we propose to apply our recent extension of these regression methods \cite{REG_arxiv}.


       In summary, the regularised AEs performed by far better than the considered alternatives, especially when it comes to
       maintain the topological structure of the initial dataset.  The present computations of geodesics provides a tool for analysing the latent space geometry encoded by the regularised AEs and
       contributes towards explainability of reliable feature selections, as initially emphasised \cite{ronneberger2015u,galimov2022tandem,yakimovich2020mimicry,fisch2020image,andriasyan2021microscopy}.

       While structural preservation is substantial for consistent post-analysis, we believe that the proposed regularisation technique can deliver new reliable insights across disciplines and may even causes corrections or refinements of prior deduced correlations.

\bibliographystyle{IEEEtran}
\bibliography{Ref.bib}

\vfill

\end{document}